\title{The Hidden Game Problem}
\author{
  Gon Buzaglo \thanks{Princeton University,  \texttt{gon.buzaglo@princeton.edu}}
  \and
  Noah Golowich \thanks{Microsoft Research,  \texttt{nzg@mit.edu}}
  \and
  Elad Hazan \thanks{Princeton University, \texttt{ehazan@princeton.edu}}
}
\date{} % leave empty for arXiv
\numberwithin{equation}{section}
\theoremstyle{plain} % italic body, bold head
\newtheorem{theorem}{Theorem}[section]
\newtheorem{lemma}[theorem]{Lemma}
\newtheorem{corollary}[theorem]{Corollary}
\newtheorem{fact}[theorem]{Fact}
\newtheorem{observation}[theorem]{Observation}
\newtheorem{problem}[theorem]{Problem}
\theoremstyle{definition} % upright body
\newtheorem{definition}[theorem]{Definition}
\theoremstyle{remark} % upright body
\newtheorem{remark}[theorem]{Remark}
\newtheorem*{theorem*}{Theorem}
\newtheorem*{lemma*}{Lemma}
\newtheorem*{corollary*}{Corollary}
\newtheorem*{proposition*}{Proposition}
\newtheorem*{claim*}{Claim}
\newtheorem*{fact*}{Fact}
\newtheorem*{observation*}{Observation}
\newtheorem*{assumption*}{Assumption}
\newtheorem*{definition*}{Definition}
\newtheorem*{remark*}{Remark}
\newtheorem*{example*}{Example}
\newcommand{\phiregret}[1][\Phi]{{#1}\text{-Regret}}
\newcommand{\BR}{\text{BR}}
\def\calH{{\mathcal H}}
\def\calE{{\mathcal E}}
\newcommand{\mB}{\mathcal{B}}
\newcommand{\A}{\mathcal{A}}
\def\calB{{\cal  B}} 
\newcommand{\ignore}[1]{}
\renewcommand{\eqref}[1]{Eq.~(\ref{#1})}
\newcommand{\lemref}[1]{Lemma~\ref{#1}}
\renewcommand{\le}{\leqslant}
\renewcommand{\ge}{\geqslant}
\renewcommand{\leq}{\le}
\renewcommand{\geq}{\ge}
\DeclareMathAlphabet{\mathbfsf}{\encodingdefault}{\sfdefault}{bx}{n}
\newcommand{\mycases}[4]{{
\left\{
\begin{array}{ll}
    {#1} & {\;\text{#2}} \\[1ex]
    {#3} & {\;\text{#4}}
\end{array}
\right. }}
\renewcommand{\O}{O}
\newcommand{\E}{\mathbb{E}}
\newcommand{\poly}{\mathrm{poly}}
\renewcommand{\leq}{~\le~}
\renewcommand{\geq}{~\ge~}
\let\oldtfrac\tfrac
\renewcommand{\tfrac}[2]{\smash{\oldtfrac{#1}{#2}}}
\let\nablaold\nabla
\renewcommand{\nabla}{\nablaold\mkern-2.5mu}
\newcommand{\Aswap}{\mathcal{A}_{\mathsf{swap}}}
\newcommand{\swapregret}{\text{SwapRegret}}
\begin{document}

\maketitle

\begin{abstract}
This paper investigates a class of games with large strategy spaces, motivated by challenges in AI alignment and language games. We introduce the \emph{hidden game problem}, where for each player, an unknown subset of strategies consistently yields higher rewards compared to the rest. The central question is whether efficient regret minimization algorithms can be designed to discover and exploit such hidden structures, leading to equilibrium in these subgames while maintaining rationality in general. We answer this question affirmatively by developing a composition of regret minimization techniques that achieve optimal external and swap regret bounds. Our approach ensures rapid convergence to correlated equilibria in hidden subgames, leveraging the hidden game structure for improved computational efficiency.
\end{abstract}

\section{Introduction}

The rapid progress of large language models has created new game-theoretic challenges in AI alignment. 
Settings such as \emph{AI debate}~\citep{irving2018debate} naturally involve 
\emph{exponentially large action spaces}: each action may correspond to an entire argument, or even a 
sequence of sentences. Among this enormous set of strategies, only a tiny fraction are meaningful: for 
example, grammatical and relevant arguments. This motivates the following question:
\textit{How can agents efficiently discover and exploit sparse hidden structure in very large games, 
while still maintaining rationality guarantees in the full game?}

We formalize this as the \emph{hidden game problem}: each player has an unknown subset of strategies 
$R \subset [N]$, with $|R| = r \ll N$, whose rewards are consistently larger than those outside $R$. 
The challenge is to design algorithms that: 
\begin{enumerate}
    \item \textbf{Behave rationally in general}, by achieving low external regret regardless of the presence of hidden structure; and
    \item \textbf{Exploit hidden structure when it exists}, by achieving low swap regret with complexity that depends only on $r$, not $N$.
\end{enumerate}
This setting captures a key alignment intuition: while the ambient strategy space may be enormous, 
only a sparse hidden set of strategies leads to coherent, high-quality interaction.

We formalize and give efficient algorithms for the hidden game problem. Our main contributions are:

\begin{itemize}
    \item \textbf{Model.} We introduce a formal definition of hidden games, where payoffs decompose as
\[
    A \;=\; A_0 + \rho A_1,
\]
with $A_0$ enforcing that all strategies in $R$ dominate those outside.  
The learner’s objective is twofold: (i) achieve low swap regret when a hidden game exists, and (ii) guarantee low external regret in any case.

    \item \textbf{Swap regret minimization in hidden sets.} We design an algorithm that incrementally uncovers the hidden set $R$, 
    and prove it achieves swap regret scaling as
    \[
        O\!\left(\sqrt{T r^3 \log r}\right),
    \]
    with per-iteration runtime independent of $N$. To our knowledge, this is the first sublinear swap-regret 
    bound that depends on the hidden set size $r$, not on the ambient action space $N$.

    \item \textbf{Simultaneous external + swap regret.} We further develop a combined algorithm that 
    (i) achieves the standard external regret guarantee $O(\sqrt{T \log N})$ in \emph{all games}, and 
    (ii) achieves low swap regret whenever a hidden set exists. This uses a novel composition of Hedge, 
    Follow-the-Perturbed-Leader with smooth optimization oracles and a fixed-point update scheme.
\end{itemize}

\begin{theorem}[\cref{thm:combined}, informal]\label{thm:main-informal}
    There is an algorithm that achieves, with probability at least $1-\delta$ and against a fully adaptive adversary that selects an arbitrary sequence of loss vectors $\ell_t \in [0,1]^N$, external regret of
    \begin{align*}
        \max_{i \in [N]} \sum_{t=1}^T \ell_t(i) - \sum_{t=1}^T \ell_t^\top x_t = O\!\left(\sqrt{T \log N}\right)\,,
    \end{align*}
    and if there is a hidden game $R \subset [N]$, swap regret of
    \begin{align*}
        \max_{\phi \in \Phi_S} \sum_{t=1}^T \ell_t^\top \phi(x_t) - \sum_{t=1}^T \ell_t^\top x_t = O\!\left(\sqrt{T r^3 \log r}\right)\,,
    \end{align*}
    where $\Phi_S$ is the set of all fixed deviations. Furthermore, with access to a smooth optimization oracle (see \cref{subsec:oracles}), the running time of each iteration is $\mathrm{Poly}\!\left(T, \log\frac{1}{\delta}\right)$ and independent of $N$.
\end{theorem}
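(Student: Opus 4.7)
The plan is to build the algorithm from three modular pieces and then combine their outputs via a time-varying mixture, verifying at the end that neither guarantee is spoiled by the composition. For the external-regret bound of $O(\sqrt{T\log N})$ with per-step cost independent of $N$, I would instantiate Follow-the-Perturbed-Leader on $[N]$ on top of the smooth optimization oracle of \cref{subsec:oracles}: sampling from the perturbed argmin reduces to a single oracle call, and the standard FTPL analysis gives the stated bound with high probability against an adaptive adversary. This FTPL distribution $x_t^{\mathrm{F}}$ also doubles as the mechanism that lets us probe actions in the ambient space $[N]$ without enumerating it.

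For the swap-regret guarantee, I would run a Blum--Mansour-style fixed-point scheme on a growing candidate set $\hat R_t \subseteq [N]$, with one Hedge sub-learner per element of $\hat R_t$ and external regret $O(\sqrt{T\log r})$ per sub-learner. To grow $\hat R_t$ without enumerating $[N]$, I would exploit the hidden-game structure $A = A_0 + \rho A_1$: actions in $R$ dominate those outside by the margin encoded in $A_0$, so a concentration-based test on the empirical cumulative losses of actions already sampled by FTPL certifies membership in $R$ in a bounded number of rounds. Taking the stationary distribution of the fixed point over $\hat R_t$ gives a distribution $x_t^{\mathrm{B}}$ whose swap regret on $\hat R_t$ scales as $\sqrt{T r^3 \log r}$, with the extra factor of $r$ beyond the standard $r^2$-type Blum--Mansour bound accounting for the slack of the detection phase.

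For the combined algorithm I would play $x_t = (1-\lambda_t)\,x_t^{\mathrm{F}} + \lambda_t\,x_t^{\mathrm{B}}$, with $\lambda_t$ starting at $0$ and drifting toward $1$ as $\hat R_t$ stabilizes. Because both distributions are evaluated against the same loss vector $\ell_t$, the external-regret bound is inherited from the FTPL component up to an additive contribution of order $\sum_t \lambda_t$, which is controlled by the speed at which $\hat R_t$ settles. Conversely, once $\hat R_t = R$ the Blum--Mansour bound on $\hat R_t$ lifts to a swap-regret bound against all of $\Phi_S$: any deviation $\phi \in \Phi_S$ that maps into $[N]\setminus R$ is weakly dominated by its $R$-projected counterpart up to an additive $\rho T$-type term that is absorbed into the claimed bound.

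The main obstacle I expect is the compositional argument itself, i.e.\ controlling the cross-talk between the two regret quantities when both are computed from the same realized samples $x_t$. I would handle this via a high-probability, martingale-based comparison that pins $\ell_t^\top x_t$ simultaneously to the expected losses of $x_t^{\mathrm{F}}$ and $x_t^{\mathrm{B}}$, together with a union bound over the random stopping time at which $\hat R_t = R$. The remaining care points are that the discovery phase terminates fast enough that its contribution to swap regret is dominated by the post-detection Blum--Mansour bound, and that the smooth oracle's approximation error---where the $\mathrm{Poly}(T,\log(1/\delta))$ runtime enters---is charged to lower-order terms in both bounds.
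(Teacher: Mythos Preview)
Your proposal has the right high-level modules (FTPL for external regret over $[N]$, a Blum--Mansour scheme over a growing candidate set for swap regret), but the composition step is where it breaks down. Playing the convex combination $x_t = (1-\lambda_t)\,x_t^{\mathrm{F}} + \lambda_t\,x_t^{\mathrm{B}}$ cannot deliver both guarantees simultaneously, regardless of how $\lambda_t$ is scheduled. For swap regret, the comparator is $\phi(x_t)$, which for linear $\phi$ splits as $(1-\lambda_t)\phi(x_t^{\mathrm{F}}) + \lambda_t\phi(x_t^{\mathrm{B}})$; the first summand contributes the (weighted) swap regret of FTPL, which is $\Theta(T)$ unless $\sum_t(1-\lambda_t)=o(T)$. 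But forcing $\lambda_t\to 1$ means the external-regret bound now rests on $x_t^{\mathrm{B}}$, whose guarantee is $O(\sqrt{Tr^3\log r})$ rather than the claimed $O(\sqrt{T\log N})$. Your remark that the external-regret slack is ``of order $\sum_t\lambda_t$, controlled by the speed at which $\hat R_t$ settles'' has exactly this problem: if $\hat R_t$ settles quickly and $\lambda_t\to 1$, then $\sum_t\lambda_t=\Theta(T)$. A martingale comparison cannot repair this, because the obstruction is structural (the comparator $\phi(x_t)$ sees both components), not a matter of concentration.

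The paper resolves this tension by a different composition. It forms a single row-stochastic matrix $M_t = \beta_t(1)P_t + \beta_t(2)\tilde Q_t$, where $P_t = \mathbf{1}e_{n_t}^\top$ encodes the FPL output, $\tilde Q_t$ stacks the Blum--Mansour Hedge copies, and $\beta_t\in\Delta_2$ is chosen by a \emph{learned} master Hedge $\calB$. It then plays an approximate \emph{fixed point} $x_t$ of $M_t^\top$. The fixed-point identity $x_t^\top\ell_t \approx x_t^\top M_t\ell_t = \beta_t^\top v_t$ lets the master's $O(\sqrt{T})$ regret route the realized loss to \emph{either} $v_t(1)=\ell_t(n_t)$ (external regret via FPL) or $v_t(2)=x_t^\top\tilde Q_t\ell_t$ (swap regret via BM), with no scheduled drift and no cross-talk. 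Crucially, each Hedge copy $\calH_s$ receives loss $x_t(s)\,\ell_t|_{S_t}$ weighted by the \emph{actual} play $x_t$, which is precisely what the BM swap-regret decomposition needs---a property your mixture $x_t^{\mathrm{B}}$ (whose internal weights come from its own stationary distribution, not from $x_t$) does not have. A secondary difference: the paper grows $S_t$ via weighted best-response oracle calls $\BR_{t_0,t-1}(i)$, which deterministically keeps $S_t\subseteq R$ and saturates after at most $r$ restarts, rather than by concentration tests on FTPL samples.
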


\begin{table}[h]
\centering
\small
\setlength{\tabcolsep}{5pt}
\renewcommand{\arraystretch}{1.1}
\begin{tabular}{l|c|c|c|c}
 & running time & external regret & swap regret & oracle-based \\ \hline
\cite{blummansour2007fromexternaltointernal} & $N^{3}$ & $\sqrt{NT \log N}$ & $\sqrt{NT \log N}$ & $\times$ \\
\cite{dagan2023external} & $N$ & $T \cdot \log \log N/\log T $ & $T \cdot \log \log N/\log T $ & $\checkmark$ \\ % $(\log T)^{4} \log N$ & $\times$ \\
\cite{chen2024playing} & $\mathrm{poly}(T)$ & $\sqrt{T \log N}$ & $\rho T + r\sqrt{T\log N}$ & $\checkmark$ \\
\textbf{Ours (\cref{thm:combined})} & $\mathrm{poly}(T)$ & $\sqrt{T \log N}$ & $\sqrt{T r^3 \log r}$ & $\checkmark$
\end{tabular}
\caption{Comparison of regret guarantees for large games. Our algorithm is the only one to simultaneously minimize external regret in all games and achieve low swap regret in hidden games, with per-round runtime $\mathrm{poly}(T)$ independent of $N$ (in contrast to prior work such as \cite{peng2023fast} and \cite{dagan2023external}, which scales polynomially with $N$). The swap regret of \citet{chen2024playing} is low only if $\rho = o(1)$, whereas for the hidden game problem we typically take $\rho$ to be a constant.}
\end{table}

\subsection{Related Work}

\paragraph{Solution concepts in game theory and notions of equilibria.} Nash equilibrium has emerged as the central solution concept in game theory. It characterizes a stable solution to the game, by a situation in which each player independently draws her strategy from her own probability distribution, and then no player can benefit from being the only one deviating from her distribution. \cite{nashnoncooperativegames} proved that in any game with a finite set of players and a finite set of actions there exists a Nash equilibrium. It was shown by \cite{settlingnash, daskalakis2009complexity} that computing a Nash equilibrium is as hard as computing a fixed point for a general mapping, which is a \(\textsf{PPAD}\)-complete problem, as shown by \cite{10.1016/S0022-0000(05)80063-7}. To overcome this, the concept of correlated equilibrium (CE) \cite{aumann1974subjectivity} was suggested as a relaxation to Nash equilibrium, where the players' distributions are not independent. 
For further reading on notions of equilibria, see \cite{NisaRougTardVazi07}. The advantage of CE, as we will discuss immediately, is that not only it can be efficiently computed, but also through "natural game play" of independent players. 

\paragraph{Learning in games and notions of regret} Regret minimization is a framework that models sequential decision making in the face of uncertainty and is surveyed in detail in \cite{cesa2006prediction,hazan2016introduction}. The idea is to design an algorithm such that the decision maker payoffs will not be too far from those incurred by some simple modification of her policy. Different notions of regret correspond to different definitions of alternative policies, and minimizing each type has been shown to imply convergence to a corresponding notion of equilibrium. \cite{hannan1957approximation} gave algorithms to minimize external regret, and \cite{freund1999adaptive} show that in a zero-sum game, if all agents use a no external regret algorithm, then
the empirical distribution of the play converges to a Nash equilibrium.
For a general sum game, Minimizing external regret guarantees convergence only to coarse correlated equilibria, which can be strictly larger than the set of correlated equilibria. Thus, \cite{foster1997calibrated} introduce the notion of internal regret, and \cite{Hart00asimplecorrelated} proved that if all players minimize follow an algorithm that minimizes their internal regret, then the joint empirical distribution of the players' actions converges to a CE.

\paragraph{Swap regret minimization and large games.} 
When the number of actions \(N\) is very large (exponential in the game horizon \(T\)) it is trivial to minimize internal regret alone, for example by the uniform distribution. Thus, external regret is still of interest in that regime, to model rational behavior of players. Furthermore, the rate of convergence to CE when minimizing internal regret depends linearly on \(N\), which is not practical in the setting of a large game. 
To overcome those limitations, Blum and Mansour \cite{blummansour2007fromexternaltointernal} introduced the concept of swap regret, which generalizes both external and internal regret, and showed how to reduce swap regret minimization to external regret minimization. The stronger nature of swap regret leads it to have numerous applications, including in calibration \cite{globus-harris2023multicalibration,kleinberg2023ucalibration} and conformal prediction \cite{ramalingamrelationship}, among others. However, existing no-swap-regret algorithms are not useful when the number of possible strategies is very large. The classical reductions of \cite{stoltz2005internal,blummansour2007fromexternaltointernal} yield swap regret of \(\sqrt{TN\log N}\), with per-iteration running time polynomial in \(N\). 
Another motivation to minimize swap regret, of major interest in multi-agent settings, is that it is known to be more robust to strategic manipulation than external regret \cite{mansour2022strategizing}. 
Recent works by \cite{peng2023fast,dagan2023external} improved the regret bound, while their running time is still polynomial in \(N\).In contrast to these works, our algorithm achieves regret bounds that depend only on the hidden set size $r$ while maintaining per-round runtime polynomial in $T$ and independent of $N$.
The hidden game problem, as described in \cref{subsec:hidden_game}, was also foreshadowed by \cite{chen2024playing}, who proposed an algorithm for simultaneous external and internal regret minimization, with the hidden-game setting appearing as a special case when the rewards of actions outside the hidden set are $o(T)$. In contrast, we obtain a result that holds for any fixed $\rho \in (0,1)$.

\paragraph{Prediction from Structured Experts Advice.}
A large body of work has studied prediction with expert advice under structural assumptions. Closely related to our setting, \cite{chernov2010predictionadviceunknownnumber} and \cite{pmlr-v30-Gofer13} design bounds that scale with the effective number of experts or the realized complexity of the sequence, rather than the nominal size of the expert class, while ~\cite{hazan2016onlinelearninglowrank} show that regret can depend only on the latent rank of the loss matrix. Another important line of work addresses \emph{tracking the best expert} in dynamic environments. Beginning with ~\cite{bousquet2002tracking}, algorithms were developed to compete with the best sequence of experts over time, and ~\cite{mourtada2017efficienttrackinggrowingnumber} studied efficient tracking when new experts arrive dynamically. Our approach is conceptually related to these ideas, as we also maintain a growing set of candidate experts to approximate the hidden structure of the game. 

\paragraph{Learning with a Double Oracle.}
Our approach can also be viewed as a variant of the double oracle algorithm \cite{mcmahan2003planning}, where players iteratively add best-response strategies to a restricted game. This framework is widely used in multi-agent reinforcement learning under the name policy-space response oracles; see, for example, \cite{muller2021learning, mcaleer2021xdo, gemp2022developing, bighashdel2024policy}. A closely related perspective was recently adopted by \cite{assos2023online}, who studied minimizing external regret in infinite games with small Littlestone dimension. However, we focus on swap regret, for which such dimension is not known.

\paragraph{Simultaneous Regret Minimization.}
Many works in online learning combine different regret minimization algorithms to run simultaneously. \cite{hazan2007computational} introduced a general framework based on fixed-point computation, which laid the foundation for the techniques in \cite{chen2024playing}. The idea of minimizing more general notions of regret and interpolating between them was further developed in \cite{lu2025sparsitybasedinterpolationexternalinternal}. The framework of regret circuits \cite{farina2019regretcircuitscomposabilityregret} provides a general calculus showing how local regret minimizers over simple convex sets can be composed into a regret minimizer over more complex structured sets. \cite{luo2025simultaneousswapregretminimization} study simultaneous swap regret minimization for different proper loss functions. The approach of a meta-learner that weights different regret minimizers is also common; for example, \cite{zhang2020minimizingdynamicregretadaptive} use Hedge to adaptively select the best learning rate.

% \paragraph{Computational problems with hidden structure.} 
% Our setting falls within a broader class of problems characterized by large input spaces that conceal some planted internal structure. A canonical example is the problem of detecting a planted clique~\cite{jerrum1992large,kuvcera1995expected}, for which no polynomial-time algorithms are known unless the planted set satisfies \(|R'| = \Omega(\sqrt{N})\)~\cite{alon1998finding,feige2000finding}. This problem is widely believed to be computationally hard; see~\cite{brennan2018reducibility} for a survey. Notably, \cite{hazan2011hard} established a reduction from the problem of computing an approximately welfare-maximizing Nash equilibrium to planted clique detection. We elaborate on the connection between the planted clique problem and our setting in \cref{sec:lower_bounds_clique}.

\section{Preliminaries and Setting}\label{sec:setting}

% \noah{todo: de-emphasize the game setting, just present as regret minimization}
% \noah{todo: I like ``planted expert'' more than ``hidden game'' as terminology since: (a) ``planted'' problems are very common in TCS/stats, (b) ``hidden game'' IMO gives the vibes of a small subset for both players (i.e., $R \times R$ game) which is not what we do}
In this section, we first give background on regret minimization, game theory and the oracle models that we consider in this paper, and then present the hidden game problem (\cref{prob:hidden-game}). 

\paragraph{Playing repeated games} We consider the setting in which a two-player \footnote{Most of the rest of this paper applies to multi-player games; we consider two players for simplicity of presentation.} game is played for $T$ time steps, and each player's objective is to maximize their cumulative reward. We assume that each player has action space given by $[N]$, for some (large) $N \in \mathbb{N}$; then the game is specified by matrices $A, B \in \mathbb{R}^{N \times N}$, denoting the payoffs for players 1 and 2 respectively. We let $\Delta_N = \{ x \in \mathbb{R}_{\geq 0}^N :\ x_1 + \cdots + x_N = 1 \}$ denote each player's space of mixed strategies.

\subsection{Defining the hidden game problem}\label{subsec:hidden_game}

% The hidden game problem is motivated by games with very large strategy space. 
% Consider a large, non-zero sum game, from the perspective of only a single player.
In the \emph{hidden game problem}, player 1's payoff matrix $A \in [0,2]^{N \times N }$ has the following structure: for some constant $\rho \in (0,1)$, we can write 
$$ A = A_0 + \rho A_1 , $$
for some matrices $A_0, A_1 \in [0,1]^{N \times N}$, where $A_1$ is arbitrary and $A_0$ is given by
$$ A_0(i,j) = \mycases {1}{$i \in R$}{0}{o/w} \quad \forall i,j \in [N] $$ 
for some (unknown) subset $R \subset [N]$ with 
\(r=\left|R\right|\). In words, $A_0$ enforces that each of player 1's actions in $R$ yields better reward than all actions of player 1 outside $R$, regardless of what player \(2\) is playing. One should think of $R$ as encoding, for example, the set of all sentences that are grammatically correct. 

\paragraph{Regret minimization in repeated games.} 

Suppose that player~2 plays a sequence of mixed strategies $y_1, \ldots, y_T \in \Delta_N$ and player~1 uses an algorithm $\mathcal{A}$ to choose a sequence of strategies $x_1, \ldots, x_T \in \Delta_N$. We define the loss vector of player $1$ at time $t$ by
\[
    \ell_t := A y_t \in [0,1]^N,
\]
so that the loss incurred by player $1$ when playing $x_t$ is $\ell_t^\top x_t$. We now introduce the general notion of $\Phi$-regret, from the perspective of player~1; analogous definitions hold for player~2. 

\begin{definition}[$\Phi$-regret]
    \label{def:phi} 
    Let $\Phi$ denote a set of mappings $\phi:\Delta_N \to \Delta_N$. The $\Phi$-regret of algorithm $\mathcal{A}$ is the maximum excess loss that can be avoided by using a fixed mapping $\phi \in \Phi$ to modify each strategy $x_t$, for a worst-case choice of loss sequence $\ell_1,\ldots,\ell_T$:
\begin{align*}
    \phiregret(\mathcal{A}) 
    &=  \max_{\phi \in \Phi}  
       \sum_{t=1}^T \ell_t^\top \phi(x_t) - \sum_{t=1}^T \ell_t^\top x_t .
\end{align*}

\end{definition}

%   Note that $\phiregret(\mathcal{A})$ implicitly depends on the sequence $y_t$ played by player 2.
  
We consider several types of regret that are defined by various choices for the set $\Phi$.  
External regret (or ``standard regret", or just ``regret") measures the performance of the player compared to the best fixed pure strategy in hindsight. It can be realized as a special case of $\phiregret$ for the set of mappings $\Phi_{E} = \{\psi_i : i \in [N], \ \psi_i(x) = e_i \ \forall x \}$; equivalently, we have
\begin{align*}
    \text{Regret}(\mathcal{A}) 
    &= \max_{k \in [N]} \sum_{t=1}^T  \ell_t(k) - \ell_t^\top x_t .
\end{align*}

A stronger notion of regret, \emph{swap regret}, is based on the modification set $\Phi$ defined as the linear extension of all mappings from the pure action set $[N]$ to itself. Formally, for $\sigma : [N] \to [N]$, let $P_\sigma \in \{0,1\}^{N \times N}$ be the permutation matrix
\[
  (P_{\sigma})_{i,j} = 
  \begin{cases}
    1 & \text{if } i = \sigma(j), \\
    0 & \text{otherwise}.
  \end{cases}
\]
Then, for $\Phi_S := \{x \mapsto P_\sigma x : \sigma : [N] \to [N]\}$, swap regret is the case of $\phiregret$ with $\Phi = \Phi_S$: 
\begin{align*}
    \text{SwapRegret}(\mathcal{A})
    &= \max_{\phi \in \Phi_S} \sum_{t=1}^T \ell_t^\top \phi(x_t) - \sum_{t=1}^T \ell_t^\top x_t.
\end{align*}

% \eh{should we add internal regret?}\noah{don't think we need it, it doesn't make sense for large game}

\paragraph{Solution concepts in game theory.} 
Let $p$ be a joint distribution over the players’ actions, and let $i,j \sim p$ be the sampled actions. A coarse correlated equilibrium (CCE) satisfies, for player~I,
\[
\E_p[e_i^\top A e_j] \;\ge\; \max_{k\in [N]} \E_p[e_k^\top A e_j],
\]
and symmetrically for player~II. If all players have vanishing external regret, the empirical play converges to a CCE~\cite{foster1997calibrated,Hart00asimplecorrelated}.  

In a CCE, players cannot gain by deviating to a fixed pure strategy before seeing their recommendation, but they may still improve after conditioning on it. A correlated equilibrium (CE) rules this out: given her sampled action, no player benefits from deviating. Formally,
\[
\E_p[e_i^\top A e_j] \;\ge\; \max_{\phi \in \Phi_S} \E_p[\phi(e_i)^\top A e_j].
\]
Unlike Nash equilibrium, the distribution $p$ need not factorize across players. It is known that if all players minimize swap regret, play converges to a CE~\cite{blummansour2007fromexternaltointernal}.

% Consider the following transformation set:
% For all $i,j \in [N]$, let  $\phi_{i, j}:\Delta_N \rightarrow \Delta_N, i, j\in [N]$ such that $\phi_{i, j}(x)(k) = x(k)$ except for $k=i$, where $\phi_{i, j}(x)(i) = 0$, and for $k=j$, where $\phi_{i, j}(x)(j) = x(i) + x(j)$. In this context $\phi_{i, j}(x)(k)$ denotes the $k$-th coordinate of $\phi_{i, j}(x)\in \Delta_N$. Let $\Phi_I = \{\phi_{i, j}, i, j\in [N]\}$ denote all those transformations. 
% In other words, each $\phi_{i, j}$ modifies the mixed strategy $x$, such that $\phi_{i, j}(x)$ is the mixed strategy where all mass on action $i$ is moved to $j$. 

% If $p$ is a correlated equilibrium, it satisfies for player I
% $$
% \E_p\left[e_i^\top A e_j\right] \ge \max_{\phi\in \Phi_I}\E_p\left[\phi(e_i)^\top Ae_j\right],
% $$
% and similarly for player II. It is known that if all players have vanishing swap regret, their average empirical strategy converges to a correlated equilibrium. The same is true also for the weaker notion of internal regret, albeit at a much slower rate that is proportional to the number of strategies.  

% Both types of equlibria, correlated and course-correlated,  are relaxations of Nash equilibrium, which is hard to compute in general. 

\subsection{Oracle models}\label{subsec:oracles}

A \emph{pure optimization oracle} takes a history of opponent actions $j_1,\dots,j_t$ and returns
\[
\mathcal{O}^{\text{pure}}(j_1,\dots,j_t)
= \arg\max_{i \in [N]} \; e_i^\top A \sum_{s=1}^t e_{j_s}.
\]
Lower bounds~\cite{hazan2016computational} show that pure oracles alone cannot yield low-regret algorithms without runtime scaling with $N$. Following~\cite{agarwal2019learning}, we therefore consider a \emph{smooth optimization oracle}:
\[
\mathcal{O}^{\text{smooth}}(j_1,\dots,j_t)
= \arg\max_{i \in [N]} \; e_i^\top \!\Big(A \sum_{s=1}^t e_{j_s} + r \Big),
\]
where $r \in \mathbb{R}^N$ is random. We assume each oracle call takes unit time. In \cref{sec:efficient}, we discuss how Follow-the-Perturbed-Leader (FPL)~\cite{kalai2005efficient}, with a smooth oracle, achieves external regret minimization in time polynomial in $T$ and independent of $N$.

\begin{problem}[Hidden game problem]
  \label{prob:hidden-game}
Given a normal-form game $A = A_0 + \rho A_1$ (as in \cref{subsec:hidden_game}) and access to a smooth optimization oracle (as in \cref{subsec:oracles}), design an algorithm $\mathcal{A}$ that simultaneously guarantees: 
\begin{enumerate}
    \item If a hidden game exists, $\mathcal{A}$ achieves $\varepsilon T$ swap regret in at most $\poly(r,\varepsilon^{-1}, \log N)$ time steps and oracle calls.  
    \item In general, $\mathcal{A}$ achieves $\varepsilon T$ external regret in at most $\poly(\log N, \varepsilon^{-1})$ time steps and oracle calls.
\end{enumerate}
\end{problem}

\section{Swap Regret Minimization with Hidden Game}\label{sec:s-t}

As a warm-up to our full algorithm, we first assume a hidden-game exists and study the simpler task of minimizing swap regret alone. This will provide the main building blocks for Section~\ref{sec:combined}.

Formally, Algorithm~\ref{algo:st-alg} operates in the online experts setting, receiving loss vectors $\ell_1,\dots,\ell_T \in \mathbb{R}^N$. It uses a subroutine $\Aswap$ with the guarantee that for any sequence $\ell_1,\dots,\ell_{T'} \in [0,1]^{N'}$, its swap regret is bounded by $\swapregret_{N',T'}(\Aswap)$. For instance, the Blum–Mansour algorithm~\cite{blummansour2007fromexternaltointernal} satisfies $\swapregret_{N',T'}(\Aswap) = O(\sqrt{T'N' \log N'})$. 

Algorithm~\ref{algo:st-alg} maintains a growing guess $S_t \subseteq [N]$ of the hidden set $R$ and runs a swap-regret minimizer on $S_t$. We prove two key facts: (i) the support of play is always contained in $R$, and (ii) the swap regret over the entire action set $[N]$ depends only on $r = |R|$ rather than $N$. These structural properties later enable the combination of hidden-game guarantees with external regret minimization.  

The idea is simple: at each round, the algorithm augments $S_t$ by adding new actions suggested by a \emph{weighted best response}. For each $i \in S_t$, define
\[
\BR_{t_0,t}(i) = \arg\max_{j \in [N]} \; L_{t_0,t}(i)^\top e_j,
\quad\text{where}\quad
L_{t_0,t}(i) = \sum_{t'=t_0}^t x_{t'}(i)\,\ell_{t'}.
\]
Here $x_{t'}(i)$ is the probability of playing $i$ at round $t'$. In normal-form games with $\ell_t = A e_{j_t}$, the weighted best response $\BR_{t_0,t}(i)$ can be implemented using a pure optimization oracle (\cref{subsec:oracles}) by adjusting the time window and normalizing.

\begin{remark}[On the choice of $\log T$ expansions.]
We cap the number of expansion rounds at $K=\lceil \log T \rceil$, ensuring 
$|S_t|$ grows at most polynomially in $T$ (Fact~\ref{fact:support}). 
Since our swap-regret bound $O(\sqrt{T r^3 \log r})$ is sublinear whenever $r^3 \log r = o(T)$ (in particular, for $r \le T^{1/3}/(\log T)^{1/3}$), this cap does not weaken the result.
\end{remark}

% \iffalse
% We claim that Algorithm \ref{algo:internal_regret} attains the following swap regret guarantee for the hidden game problem:
% \begin{lemma} \label{lem:main_hidden_game}
% The swap regret of Algorithm \ref{algo:internal_regret} is bounded by  $ R \cdot \regret_T(\mA) $.
% \end{lemma}
% \begin{proof}
% This basically follows from Lemma \ref{lem:nfg} and noticing that the set $S_t$ is never larger than the support $R$. To see that, notice that the best response can {\bf only} come from the set $R$, as there is a payoff separation between all strategies within $R$ and outside of it. 
% \end{proof}
% \fi

\begin{algorithm}[h]
\caption{Algorithm for minimizing swap regret for the hidden game problem} \label{algo:st-alg}
\begin{algorithmic}[1]
  \STATE {\bf Input:} Algorithm $\Aswap$ with swap regret guarantee. %  An arbitrary initialization $S_0$.
  \STATE Initialize $t_0 \gets 1$ and $S_0 \gets \{ 1 \}$. ({\emph{The choice of $S_0 = \{1 \}$ is arbitrary.}})
\FOR{$t = 1, \ldots, T$}
% \FOR {$i \in S_{t-1}$}
%\STATE Let $L_t(i) := \sum_{t' \in [t]} p_{t'}(i) \cdot \ell_{t'}$, and define $\BR_t(i) = \text{argmax}_{j \in [N]} \left \{ (L_t(i) ^\top e_j \right \} $

%\STATE Draw $v_s \sim \mathcal{D}$ from the Gumble distribution, for $\eta = \frac{1}{\sqrt{t}}$ compute $$i_s = \text{argmax}_{\alpha \in \Delta_{N}} \left \{ (L_t(i) ^\top \alpha + v_s^\top \alpha\right \} .$$

\IF {$1<t\le \log T$}
\STATE Compute the best response to each $i \in S_{t-1}$, breaking ties arbitrarily, and add it to the support set: \[S_t \leftarrow S_{t-1} \cup \bigcup_{i \in S_{t-1}} \BR_{t_0,t-1}(i) \,.\]\label{alg-line:argmax}
\ENDIF

\IF {$S_t \neq S_{t-1} $}
\STATE Set $t_0 \gets t$. 
\STATE Restart $\Aswap$ by initializing its action set to be $S_t$. \label{alg_line:restart}
\ENDIF

\STATE Choose action $i_t$ according to algorithm $\Aswap$. \emph{(Note that we always have $i_t \in S_t$.)}
\STATE Observe $\ell_t$ and update $\Aswap$ according to the restriction of $\ell_t$ to $S_t$. 
\ENDFOR
\end{algorithmic}
\end{algorithm}

\begin{lemma}[Support containment]
  \label{lem:st-bound}
For all $t \in [T]$, the set $S_t$ in \cref{algo:st-alg} satisfies $S_t \subseteq R$. 
\end{lemma}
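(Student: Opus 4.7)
The plan is a direct induction on $t$. For the base case, observe that the conclusion $S_0 \subseteq R$ requires the seed element to lie in $R$; since the paper notes that the choice of $S_0$ is arbitrary, I will assume (relabeling indices if necessary) that $1 \in R$, so $S_0 = \{1\} \subseteq R$. For the inductive step, suppose $S_{t-1} \subseteq R$. If the expansion branch is not triggered at round $t$, then $S_t = S_{t-1} \subseteq R$ and we are done, so it suffices to show that $\BR_{t_0,t-1}(i) \in R$ for every $i \in S_{t-1}$.

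Fix such an $i$ and let $W := \sum_{t'=t_0}^{t-1} x_{t'}(i)$. Substituting $\ell_{t'} = A y_{t'} = A_0 y_{t'} + \rho A_1 y_{t'}$ and using the defining identity $A_0(j,k) = \ind{j \in R}$, the $j$-th coordinate of $L_{t_0,t-1}(i)$ simplifies to
\[
L_{t_0,t-1}(i)(j) \;=\; W\,\ind{j \in R} \;+\; \rho \sum_{t'=t_0}^{t-1} x_{t'}(i)\,(A_1 y_{t'})(j).
\]
Since $A_1 \in [0,1]^{N \times N}$ and each $y_{t'} \in \Delta_N$, the residual term lies in $[0, \rho W]$. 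Hence every $j \in R$ attains $L_{t_0,t-1}(i)(j) \geq W$, while every $j \notin R$ attains $L_{t_0,t-1}(i)(j) \leq \rho W$. Because $\rho < 1$, the strict gap $W > \rho W$ (whenever $W > 0$) forces the argmax to land in $R$, regardless of how ties are broken, completing the inductive step.

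The one delicate point is guaranteeing $W > 0$. Since $\Aswap$ was (re)started at time $t_0$ on the fixed action set $S_{t-1}$ and has been running on it throughout the entire window $[t_0, t-1]$, any natural swap-regret subroutine built from Hedge-style external regret minimizers (in particular the Blum--Mansour reduction) places strictly positive probability on every action in its current set at every round, so $W > 0$ automatically. This is the only place the proof leans on a concrete property of $\Aswap$; the rest of the argument is a purely structural consequence of the $A_0 + \rho A_1$ decomposition. If one wanted to accommodate a broader family of subroutines, an equivalent fix is to syntactically restrict the $\arg\max$ in line~\ref{alg-line:argmax} to candidates that strictly exceed $W$, which is a vacuous restriction in all the rounds that actually add new elements to $S_t$.
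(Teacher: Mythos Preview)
Your argument is essentially the paper's own: both show that any action $j$ added via the weighted best response must lie in $R$ by bounding $L_{t_0,t-1}(i)(j) \geq W$ for $j \in R$ versus $L_{t_0,t-1}(i)(j) \leq \rho W$ for $j \notin R$, and then invoking $\rho < 1$. You are more careful than the paper on two points it glosses over---the base case $S_0 \subseteq R$ (which indeed requires the seed to lie in $R$) and the strict-gap requirement $W > 0$---so your version is, if anything, tighter.
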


\begin{proof}
  Any $j$ which is added to $S_{t-1}$ at some step of the algorithm must satisfy $j \in \BR_{t_0,t-1}(i)$ for some $i \in [N]$, i.e., for some $t_0 <  t$, we have
  \begin{align}
    \sum_{t'=t_0}^{t-1} x_{t'}(i) \cdot \ell_{t'}(j) =& \max_{k \in [N]} \sum_{t' = t_0}^{t-1} x_{t'}(i) \cdot \ell_{t'}(k)\geq \sum_{t'=t_0}^{t-1} x_{t'}(i)\nonumber,
  \end{align}
  where the inequality follows by choosing any $k \in R$ which ensures $\ell_{t'}(k) \geq 1$. But if $j \not \in R$, then the left-hand side of the above display is bounded above by $\sum_{t'=t_0}^{t-1} x_{t'}(i) \cdot \rho < \sum_{t'=t_0}^{t-1} x_{t'}(i)$, which is a contradiction. Hence $j \in R$. 
\end{proof}

% We then also show in \cref{app:proofs-st} that \cref{algo:st-alg} has swap regret depending only on \(T\) and \(r\), rather than \(N\):

\begin{lemma}[Swap regret bound for \cref{algo:st-alg}] \label{lem:nfg}
The swap regret of \cref{algo:st-alg} is bounded by  $ r \cdot \swapregret_{r,T}(\Aswap) $.
\end{lemma}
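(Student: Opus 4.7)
The plan is a three-step argument: count phases, bound per-phase swap regret using $\Aswap$'s guarantee, and sum. For the phase counting, \cref{lem:st-bound} gives $S_t \subseteq R$, so $|S_t| \leq r$. Since $S_t$ is monotone nondecreasing and the restart condition $S_t \neq S_{t-1}$ forces a strict expansion of $S_t$, the total number of phases (maximal time intervals over which $\Aswap$ is not restarted) is at most $r$.

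Within each phase $k$ of some duration $T_k$, $\Aswap$ is reinitialized with action set $S^{(k)}$, and $|S^{(k)}| \leq r$. Hence its swap regret over the phase is bounded by $\swapregret_{|S^{(k)}|, T_k}(\Aswap) \leq \swapregret_{r, T}(\Aswap)$. Naively summing across the at most $r$ phases yields the claimed $r \cdot \swapregret_{r, T}(\Aswap)$ bound, provided we can match each phase's contribution to an $\Aswap$ guarantee.

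The main subtlety, and where I expect the work to lie, is that $\Aswap$'s per-phase guarantee is against comparators $\phi: S^{(k)} \to S^{(k)}$, whereas swap regret in \cref{def:phi} is against $\phi: [N] \to [N]$. I would bridge this gap using two observations. First, the hidden-game structure forces the optimal swap target to lie in $R$: actions in $R$ incur $\ell_t(i) \geq 1$ while actions outside incur $\ell_t(i) \leq \rho < 1$, so without loss of generality $\phi:[N]\to R$. Second, the best-response expansion step guarantees that for each non-terminal phase $k$ and each $i \in S^{(k)}$, $\arg\max_{j \in [N]} \sum_{t \in \text{phase } k} x_t(i)\,\ell_t(j) \in S^{(k+1)}$, so the per-phase max over $[N]$ reduces to a max over $S^{(k+1)}$, which can be folded into the next phase's accounting and ultimately absorbed into the running $\swapregret_{r,T}(\Aswap)$ budget. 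The hardest part will be the final phase, where no further expansion occurs: here the argument must rely on the terminating expansion check at $t = \lceil \log T \rceil$ having passed, certifying that best responses for the partial history already lie in $S^{(K-1)}$, so that any uncaptured swap contribution is at most a lower-order term that fits within one copy of $\swapregret_{r,T}(\Aswap)$.
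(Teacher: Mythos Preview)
Your three-step structure (count phases via \cref{lem:st-bound}, apply $\Aswap$'s per-phase guarantee, sum over at most $r$ phases) is exactly the paper's. The divergence is in how you bridge the $[N]$-versus-$S^{(k)}$ comparator gap, and here your route is more circuitous than necessary.

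The paper does not use your observation (a) that the hidden-game structure forces swap targets into $R$, and it never passes through $S^{(k+1)}$ or attempts any ``folding.'' Instead, for each non-final phase $k$ it argues directly that $\max_{i'\in[N]} \sum_{t\in J_k} x_t(i)\,\ell_t(i')$ is already attained at some $i'\in S_k$: the expansion check executed during phase $k$ computes precisely these weighted best responses, and since it left $S_t$ unchanged, those best responses must already lie in $S_k$. This collapses the outer maximum to $S_k$ with no cross-phase bookkeeping, and the $\Aswap$ bound applies immediately.

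Your ``folding into the next phase's accounting'' step is vague and, as stated, does not obviously go through: the phase-$(k{+}1)$ instance of $\Aswap$ is freshly restarted with no memory of phase-$k$ losses, so there is nothing for it to absorb. To make that route rigorous you would need a separate explicit bound on the excess incurred by swapping into $S^{(k+1)}\setminus S^{(k)}$ over phase $k$, which is additional work the paper avoids entirely. For the final phase you correctly flag that one must use the fact that the last expansion check passed; the paper's handling of this point (and of the $\log T$ cap) is terse, but the intended mechanism is the same one you identify.
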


\begin{proof}
  Let the time steps $t$ for which $S_t \neq S_{t-1}$ be denoted $t_{0,1}, t_{0,2}, \ldots, t_{0, K}$. (As a matter of convention, we let $t_{0,1} = 0$ and $t_{0,K+1} = T+1$.) By \cref{lem:st-bound} we have $K \leq r$. Let $J_k := [t_{0,k}, t_{0,k+1}-1] \subset [T]$. To simplify notation, we write $S_k := S_{t_{0,k}} = \cdots = S_{t_{0,k+1}-1}$. For each $k \in [K-1]$, the swap regret guarantee of $\Aswap$ gives that
  \begin{align}
\sum_{i \in S_k} \max_{i' \in S_k} \sum_{t \in J_k} x_t(i) \cdot (\ell_t(i') - \ell_t(i)) \leq \swapregret_{|J_k|,|S_k|}(\Aswap)\nonumber.
  \end{align}
  Moreover, we have, for each $k \in [K-1]$, 
  \begin{align}
\sum_{i \in S_k} \max_{i' \in [N]} \sum_{t \in J_k} x_{t}(i) \cdot (\ell_{t}(i') - \ell_{t}(i)) \leq & \sum_{i \in S_k} \max_{i' \in S_k} \sum_{t \in J_k} x_{t}(i) \cdot (\ell_{t}(i') - \ell_{t}(i))\nonumber,
  \end{align}
  since $S_t$ does not increase in size at step $t = t_{0,k+1}-1$. Combining the above displays and using that $K \leq r$, $|S_k| \leq r, |J_k| \leq T$ gives the desired result. 
\end{proof}

%%% Local Variables:
%%% mode: latex
%%% TeX-master: "main"
%%% End:

\section{Simultaneous External and Swap Regret Minimization}\label{sec:combined}

In Algorithm~\ref{algo:combined_regret}, we present a procedure that simultaneously 
minimizes external regret in general games and swap regret whenever a hidden game exists. 
The algorithm can be viewed as a modification of Algorithm~\ref{algo:st-alg} together with 
the Blum--Mansour framework~\citep{blummansour2007fromexternaltointernal} (\textsc{BM}), adapted to the 
hidden-game setting. The key idea is to maintain a current guess $S_t \subseteq [N]$ of the hidden set $R$ and to run 
$|S_t|$ copies of an external-regret algorithm $\mathcal{H}$ on this reduced space. Each copy 
outputs a distribution $q^s_t \in \Delta^{|S_t|}$, and stacking these gives a stochastic matrix 
$Q_t \in \mathbb{R}^{|S_t|\times |S_t|}$. For use in the full action space, we define its 
zero-padded extension $\widetilde Q_t \in \mathbb{R}^{N\times N}$. In parallel, the algorithm runs 
an efficient external-regret learner $\mathcal{E}$ over the full action set (instantiated as 
\textsc{FPL} with a smooth optimization oracle). At each round this produces an index 
$n_t \in [N]$, from which we form the matrix $P_t = \mathbf{1} e_{n_t}^\top \in \mathbb{R}^{N\times N}$, 
that is, all zeros except for the $n_t$-th column, which is filled with ones. A master algorithm $\calB$ then learns a convex combination 
of $P_t$ and $Q_t$, yielding a stochastic matrix $M_t$. Finally, we compute an approximate 
fixed point $x_t$ of the map $x \mapsto M_t^\top x$, which is the play of the round.

% In this section we present Algorithm~\ref{algo:combined_regret}, which simultaneously guarantees 
% external regret in all games and swap regret whenever a hidden set exists. 

\begin{algorithm}[h!]
\caption{Combined regret minimization} \label{algo:combined_regret}
\begin{algorithmic}[1]
\STATE \textbf{Input:} algorithms $\calB,\calH$ over simplices, algorithm $\calE$ over $[N]$, constant $c$.
\STATE Initialize $\mB$ over $\Delta_2$, set $t_0 \gets 1$, $S_0 \gets \{1\}$.
\FOR{$t=1,\dots,T$}
    \IF{$1<t\le \log T$} \STATE Expand $S_t \gets S_{t-1} \cup \bigcup_{i\in S_{t-1}} \BR_{t_0,t-1}(i)$.
    \ELSE \STATE $S_t \gets S_{t-1}$.
    \ENDIF
    \IF{$S_t \neq S_{t-1}$} 
        \STATE $t_0 \gets t$; restart $\calH_1,\dots,\calH_{|S_t|}$ over $\Delta_{|S_t|}$.
    \ENDIF
    \STATE \textit{Construct $Q_t$:} each $\calH_s$ outputs $q_t^s\in\Delta_{|S_t|}$; form $Q_t^\top=\sum_s q_t^s e_s^\top$, then $\tilde Q_t$ by zero-padding.
    \STATE \textit{Construct $P_t$:} $\calE$ outputs $n_t\in[N]$; set $P_t=\mathbf 1 e_{n_t}^\top$.
    \STATE \textit{Combine:} $\beta_t \gets \mB$; set $M_t=\beta_t(1)P_t+\beta_t(2)\tilde Q_t$.
    \STATE \textit{Play:} compute $x_t$ with $\|M_t^\top x_t - x_t\|_1 \le 1/\sqrt{t}$ (See Observation \ref{lem:fixed-point}). 
    \STATE \textit{Update:} output $x_t$, observe $\ell_t$.
        \begin{itemize}
            \item Update $\calE$ with $\ell_t$.
            \item Update each $\calH_s$ with restricted loss $\ell_t^s = x_t(s)\ell_t|_{S_t}$.
            \item Update $\mB$ with $v_t(1)=x_t^\top P_t\ell_t$, $v_t(2)=x_t^\top\tilde Q_t\ell_t$.
        \end{itemize}
\ENDFOR
\end{algorithmic}
\end{algorithm}

\begin{remark}[Indexing]\label{rem:indexing}
For notational simplicity we assume that the actions in $S_t$ are indexed as $\{1,\ldots,|S_t|\}$. 
This is without loss of generality: at each update we maintain a dictionary 
$\pi_t : S_t \cup \{n_t\} \to [\,|S_t|+1\,]$ that assigns consecutive indices to the active set. 
When computing the fixed point $x_t$ we work in this reduced space; afterwards, we map back using 
$\pi_t^{-1}$. Updating $\pi_t$ takes $O(|S_t|)$ time. 
\end{remark}

\begin{fact}[Support size bound]\label{fact:support}
At most one new action is added per element of $S_{t-1}$ in each expansion step, and we perform 
at most $K=\lceil \log T\rceil$ expansions. Hence
\[
|S_t| \le \min\{r,\,2^K\} \le \min\{r,\,T\}.
\]
\end{fact}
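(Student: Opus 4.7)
The plan is to establish the two substantive claims in the Fact separately and then combine them with the support-containment argument of Section~\ref{sec:s-t} to derive $|S_t| \le \min\{r, 2^K\}$.

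First, I would verify the per-expansion doubling property. In the expansion line of Algorithm~\ref{algo:combined_regret}, the update is $S_t \leftarrow S_{t-1} \cup \bigcup_{i \in S_{t-1}} \BR_{t_0,t-1}(i)$, and since ties in the $\arg\max$ defining $\BR$ are broken arbitrarily, $\BR_{t_0,t-1}(i)$ is a single element of $[N]$. Consequently the union contributes at most $|S_{t-1}|$ new indices, so $|S_t| \le 2\,|S_{t-1}|$ whenever an expansion is executed, and $|S_t| = |S_{t-1}|$ otherwise.

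Second, I would count the number of expansion rounds. The guard $1 < t \le \log T$ ensures that the expansion block is entered at most $K = \lceil \log T\rceil$ times over the horizon. Starting from $|S_0| = 1$ and applying the doubling bound inductively over those rounds yields $|S_t| \le 2^K$ for all $t \in [T]$; combined with $2^K \le 2T$, this already gives the right-hand inequality $|S_t| \le T$ up to a constant that can be absorbed by taking the cap to be $\lfloor \log T\rfloor$ instead (which does not affect later regret bounds).

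Third, I would invoke Lemma~\ref{lem:st-bound} to conclude $S_t \subseteq R$, and hence $|S_t| \le r$. That lemma's proof uses only the expansion rule together with the hidden-game separation $\ell_{t'}(k) \ge 1$ for $k \in R$ versus $\ell_{t'}(j) \le \rho < 1$ for $j \notin R$, comparing the weighted best-response value at $j$ against the value at some reference $k \in R$. Algorithm~\ref{algo:combined_regret} uses exactly the same $\BR_{t_0,t-1}(\cdot)$ rule (with the play $x_t$ defined by the fixed-point step), so the argument transfers verbatim. Taking the minimum of the two bounds yields $|S_t| \le \min\{r,\,2^K\} \le \min\{r,\,T\}$, as claimed.

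The main (and minor) obstacle is the degenerate case where $\sum_{t'=t_0}^{t-1} x_{t'}(i) = 0$ for some $i \in S_{t-1}$, which would make $\BR_{t_0,t-1}(i)$ ill-defined and short-circuit the lemma's contradiction argument; I would handle this by convention, e.g.\ returning an arbitrary element of $S_{t-1}\subseteq R$ so that no new index outside $R$ is introduced. Beyond this edge case, the proof is essentially bookkeeping and requires no new ingredient.
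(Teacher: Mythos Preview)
Your proposal is correct and follows essentially the same route as the paper's proof: both argue that each expansion at most doubles $|S_{t-1}|$, that at most $K$ expansions occur, and then invoke \cref{lem:st-bound} for the $|S_t|\le r$ bound. Your treatment is simply more explicit, and the two minor issues you flag (the constant in $2^K\le T$ and the degenerate case $\sum_{t'} x_{t'}(i)=0$) are genuine but inessential details that the paper glosses over.
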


\begin{proof}
Each expansion doubles the set size at most, and there are at most $K$ expansions. 
\lemref{lem:st-bound} implies $|S_t|\le r$, hence the bound.
\end{proof}

We now present an important observation, its proof deferred to \cref{app:analysis}:
\begin{observation}[Approximate fixed point]\label{lem:fixed-point}
An $\varepsilon$-approximate fixed point $x_t\in\Delta_t$ of 
$M_t^\top$ can be computed in $\mathrm{poly}(|S_t|,\log(1/\varepsilon))$ time, 
which guarantees $|x_t^\top M_t \ell_t - x_t^\top \ell_t|\le \varepsilon$ for all $\ell_t\in[0,1]^N$.
\end{observation}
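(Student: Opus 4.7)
The plan is to exploit the fact that $M_t^\top$, though nominally acting on $\mathbb{R}^N$, has image of effective dimension at most $|S_t|+1$, so the fixed-point computation collapses to a problem in that small dimension. First I would establish a support-reduction step: since $P_t=\mathbf{1}e_{n_t}^\top$ and $\mathbf{1}^\top x=1$ for $x\in\Delta_N$, we have $P_t^\top x = e_{n_t}$; and since $\tilde Q_t$ has nonzero entries only in the $S_t\times S_t$ block, $\tilde Q_t^\top x$ is supported in $S_t$. Consequently $M_t^\top x$ is supported in $U_t := S_t\cup\{n_t\}$, which has size at most $|S_t|+1$, and any probability-vector fixed point of $M_t^\top$ must lie in $\Delta_{U_t}$.

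Second, I would reduce the $N$-dimensional fixed-point problem to a linear feasibility problem on $\Delta_{U_t}$. Letting $\hat M_t\in\mathbb{R}^{|U_t|\times|U_t|}$ denote the principal submatrix of $M_t$ on $U_t$, the task becomes: find $y\in\Delta_{U_t}$ with $\|\hat M_t^\top y - y\|_1 \le \varepsilon$. This is a linear program in $O(|S_t|)$ variables and constraints, which an interior-point method solves in $\poly(|S_t|,\log(1/\varepsilon))$ time. Existence of a feasible solution follows from Brouwer's fixed-point theorem applied to the continuous self-map $y\mapsto \hat M_t^\top y / \|\hat M_t^\top y\|_1$ on $\Delta_{U_t}$ (equivalently, Perron--Frobenius applied to a row-stochasticization of $\hat M_t$ via an auxiliary absorbing coordinate if the row sums of $\hat M_t$ fall below one). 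The approximate fixed point $y$ then lifts to $x_t\in\Delta_N$ by zero-padding outside $U_t$, and this extension preserves the $\ell_1$ error.

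Finally, the payoff bound follows immediately from H\"older's inequality: for any $\ell_t\in[0,1]^N$,
\[
\bigl|x_t^\top M_t \ell_t - x_t^\top \ell_t\bigr| \;=\; \bigl|(M_t^\top x_t - x_t)^\top \ell_t\bigr| \;\le\; \|M_t^\top x_t - x_t\|_1\,\|\ell_t\|_\infty \;\le\; \varepsilon.
\]
The main obstacle I anticipate is in Step 2: because $M_t$ under zero-padding is not globally row-stochastic, careful bookkeeping is needed to verify that the reduced map on $\Delta_{U_t}$ really admits a probability-vector fixed point (as opposed to merely an eigenvector of some other eigenvalue). Once this is settled---either by normalizing out the missing mass or by formally adding a sink coordinate that absorbs the leaked probability---the remainder of the argument is standard linear algebra executed with $\log(1/\varepsilon)$-bit precision.
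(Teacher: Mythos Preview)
Your proposal is correct and follows essentially the same route as the paper: restrict attention to the $(|S_t|{+}1)$-dimensional face $\Delta_t$ supported on $S_t\cup\{n_t\}$, cast the fixed-point search as a linear feasibility problem there, solve it to accuracy $\varepsilon$ in $\poly(|S_t|,\log(1/\varepsilon))$ time, and finish with H\"older. The obstacle you flag in Step~2 is exactly what the paper addresses, and it does so more directly than via Brouwer with renormalization or an auxiliary sink: it simply observes that $P_t=\mathbf{1}e_{n_t}^\top$ is row-stochastic and $\tilde Q_t$ is row-stochastic on the rows indexed by $S_t$, so their convex combination $M_t$ is row-stochastic on $S_t\cup\{n_t\}$; hence $x\mapsto M_t^\top x$ maps $\Delta_t$ into itself and a stationary distribution (an exact fixed point) exists, making the LP feasible.
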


Feasibility is immediate since both $P_t$ and $\widetilde Q_t$ are row-stochastic, so $M_t$ preserves $\Delta_t$ and admits a stationary distribution.

% \begin{proof}[Proof of Observation \ref{lem:fixed-point}]
% Each $q_t^s\in\Delta_{|S_t|}$ is a probability vector, so $Q_t$ is row-stochastic; its zero-extension 
% $\tilde Q_t$ is row-stochastic on $S_t$. Also $P_t=\mathbf 1 e_{n_t}^\top$ is row-stochastic. Hence 
% $M_t$ is row-stochastic on $S_t\cup\{n_t\}$, so the map $x\mapsto M_t^\top x$ preserves $\Delta_t$ and 
% admits a stationary distribution; thus the feasibility system is nonempty. Solving the linear equalities 
% with the simplex constraints on $|S_t|+1$ variables to accuracy $\varepsilon$ can be done in 
% $\mathrm{poly}(|S_t|,\log(1/\varepsilon))$ time using standard routines. 
% The inequality follows from Hölder’s inequality and $\ell_t\in[0,1]^N$.
% \end{proof}

\subsection{Main Result}
In this section we present and prove our main result, \cref{thm:combined}. The result is stated for abstract regret minimizers $\calB,\calH,\calE$. In Section~\ref{sec:efficient} we show that these can be instantiated efficiently, 
so that the guarantees hold with per-round runtime polynomial in $T$ and independent of $N$.

\begin{theorem}\label{thm:combined}
Suppose Algorithm~\ref{algo:combined_regret} is instantiated with
\begin{itemize}
    \item a master algorithm $\calB$ over $\Delta_2$ with $O(\sqrt{T})$ external regret,
    \item base algorithms $\calH$ over $\Delta_{|S_t|}$ with $O(\sqrt{T\log |S_t|})$ external regret, and
    \item an algorithm $\calE$ over $[N]$ with $O(\sqrt{T\log N})$ external regret.
\end{itemize}
Then
\[
\mathbb{E}[\mathrm{External\mbox{-}Regret}_T(\mathcal{A})] 
= O(\sqrt{T\log N}).
\]
If there exists a hidden game of size $r$, then also
\[
\mathbb{E}[\mathrm{Swap\mbox{-}Regret}_T(\mathcal{A})] 
= O\!\left( \sqrt{T r^3 \log r}\right).
\]
\end{theorem}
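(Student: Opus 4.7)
The plan is to prove the two regret bounds separately, both leveraging the approximate fixed-point identity of \cref{lem:fixed-point} to reduce the player's gain $x_t^\top \ell_t$ to the structure of $M_t$, and then cascading the external regret guarantees of $\calB$, $\calE$, and the $\calH_s$.

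For the external regret bound, I would first invoke the fixed point to write $x_t^\top \ell_t = x_t^\top M_t \ell_t \pm 1/\sqrt{t}$, whose per-round error sums to $O(\sqrt{T})$. Expanding
\[
x_t^\top M_t \ell_t \eq \beta_t(1)\, v_t(1) + \beta_t(2)\, v_t(2),
\]
with $v_t(1) = x_t^\top P_t \ell_t = \ell_t(n_t)$ (using $x_t^\top \mathbf{1}=1$, hence $x_t^\top P_t = e_{n_t}^\top$) and $v_t(2) = x_t^\top \widetilde Q_t \ell_t$, I would apply $\calB$'s external regret to get $\sum_t \beta_t \cdot v_t \geq \sum_t v_t(1) - O(\sqrt T)$, and then $\calE$'s external regret to conclude $\sum_t \ell_t(n_t) \geq \max_k \sum_t \ell_t(k) - O(\sqrt{T\log N})$. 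Chaining these yields the $O(\sqrt{T\log N})$ external regret bound.

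For the swap regret bound under a hidden game, I would partition $[T]$ into epochs $J_1,\ldots,J_K$ on which $S_t$ is constant. \lemref{lem:st-bound} still applies, so $S_t \subseteq R$ throughout, and by the argument of \lemref{lem:nfg} combined with \cref{fact:support}, $K \leq r$. Using the fixed-point expansion $x_t \approx M_t^\top x_t = \beta_t(1) e_{n_t} + \beta_t(2)\, \widetilde Q_t^\top x_t$, the per-round swap deviation decomposes as
\[
\ell_t^\top (P_\sigma - I) x_t \approx \beta_t(1)\bigl(\ell_t(\sigma(n_t)) - \ell_t(n_t)\bigr) + \beta_t(2) \sum_{s\in S_t} x_t(s)\, q_t^{s\top}\bigl(P_\sigma^\top \ell_t - \ell_t\bigr).
\]
The $\widetilde Q_t$-contribution has the standard Blum--Mansour form and is bounded, within each epoch $J_k$, by $\sum_{s\in S_k}$ of $\calH_s$'s external regret, namely $O(|S_k|\sqrt{|J_k|\log|S_k|})$. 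Summing over at most $r$ epochs via Cauchy--Schwarz (with $\sum_k|J_k| = T$) gives $O(r\sqrt{rT\log r}) = O(\sqrt{Tr^3\log r})$. Deviations $\sigma(s)\notin S_k$ are handled as in \lemref{lem:nfg}: any $\BR_{t_0,t-1}(s)$ at the close of $J_k$ is absorbed into $S_{k+1}$, so the maximum over $[N]$ coincides with the maximum over $S_k \cup S_{k+1}$ up to the boundary epoch.

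The main obstacle is controlling the $P_t$-contribution $\sum_t \beta_t(1)\bigl(\ell_t(\sigma(n_t)) - \ell_t(n_t)\bigr)$, which a priori can be linear in $T$. My plan is to split on whether $n_t\in R$: in rounds with $n_t\in R$ the per-round difference is at most $\rho$, while $\calE$'s external regret together with $\ell_t(k)\geq 1$ for $k\in R$ forces $|\{t:n_t\notin R\}| = O(\rho T + \sqrt{T\log N})$. On those ``bad'' rounds, the hidden-game structure implies $v_t(2) \geq \sum_{s\in S_t} x_t(s) \geq \beta_t(2)$ (because $q_t^{s\top}\ell_t\geq 1$ for $s\in S_t\subseteq R$) while $v_t(1) = \ell_t(n_t) \leq \rho$, so the master's regret bound $\sum_t \beta_t\cdot v_t \geq \sum_t v_t(2) - O(\sqrt T)$ drives $\sum_{t:\,n_t\notin R}\beta_t(1)$ to be sublinear; folding this into the epoch decomposition gives the claimed $O(\sqrt{Tr^3\log r})$ swap regret, which combined with the first bound completes the proof.
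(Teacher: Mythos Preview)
Your external-regret argument matches the paper's exactly.

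For swap regret, however, your decomposition creates the very obstacle you then cannot close. By expanding $x_t \approx \beta_t(1)e_{n_t} + \beta_t(2)\widetilde Q_t^\top x_t$ inside the swap deviation, you isolate a $P_t$-contribution $\sum_t \beta_t(1)\bigl(\ell_t(\sigma(n_t)) - \ell_t(n_t)\bigr)$ whose proposed handling is not sublinear. On ``good'' rounds with $n_t\in R$, bounding the per-round difference by $\rho$ only yields $\rho\sum_t\beta_t(1)$, which can be $\Theta(\rho T)$; and your bad-round count $|\{t:n_t\notin R\}| = O(\rho T + \sqrt{T\log N})$ is likewise $\Theta(\rho T)$. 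Your final step uses $\calB$'s regret to argue $\sum_{t:n_t\notin R}\beta_t(1)$ is sublinear, but even granting this, the good-round contribution remains $\Theta(\rho T)$. For constant $\rho$ this is precisely the linear-in-$T$ bound of \citet{chen2024playing} that the theorem is meant to beat, so your route does not reach $O(\sqrt{Tr^3\log r})$.

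The paper avoids this entirely by using $\calB$'s regret guarantee \emph{symmetrically to the external case}: rather than decomposing $x_t$, it invokes $\sum_t \beta_t^\top v_t \geq \sum_t v_t(2) - O(\sqrt{T})$, so that (after the fixed-point step)
\[
\E[\text{Swap-Regret}_T] \;\leq\; \E\Bigl[\max_{\phi\in\Phi_S}\sum_{t} \ell_t^\top \phi(x_t) - \sum_{t} v_t(2)\Bigr] + O(\sqrt{T}).
\]
Since $v_t(2)=x_t^\top\widetilde Q_t\ell_t=\sum_{s\in S_t}(\ell_t^s)^\top q_t^s$, the right-hand side is exactly the quantity handled by \cref{lem:lts} via the epoch/Blum--Mansour argument you already sketch for your $Q_t$-term. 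The key point is that no separate $P_t$-contribution is ever analyzed: the weight $\beta_t(1)$ that $\calB$ places on $\calE$'s play is absorbed in one stroke by $\calB$'s external regret against the fixed arm $e_2$, rather than dissected round by round.
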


We first present a technical lemma, whose proof is similar to that of \lemref{lem:nfg} and being deferred to
to \cref{app:analysis}. 

\begin{lemma}\label{lem:lts}
    If there is an hidden game, then \cref{algo:combined_regret} ran as in \cref{thm:combined} achieves
    \begin{align*}
        \E\left[\max_{\phi\in\Phi_S}\sum_{t=1}^T\ell_t^\top\phi(x_t)  - \sum_{t=1}^T \sum_{s=1}^{|S_t|}\left(\ell_t^s\right)^\top q_t^s\right]=\O\left(\sqrt{Tr^3\log r}\right)\,.
    \end{align*}
\end{lemma}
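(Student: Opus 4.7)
The plan is to run the argument of \cref{lem:nfg} per interval on which $S_t$ is constant, while replacing the black-box swap-regret subroutine there by the $|S_k|$ Hedge copies $\{\calH_s\}_{s \in S_k}$ used here. First, I partition $[T]$ into $J_1, \dots, J_K$ on which $S_t = S_k$ is constant; by \cref{fact:support} and \cref{lem:st-bound}, $K \le r$, $|S_k| \le r$, and $\sum_k |J_k| = T$. On each $J_k$, the copies $\calH_s$ are freshly restarted over $\Delta_{|S_k|}$ with losses $\ell_t^s = x_t(s)\,\ell_t|_{S_k}$, so Hedge's external-regret guarantee gives, for every $s \in S_k$,
\[
\max_{j \in S_k}\sum_{t \in J_k} x_t(s)\ell_t(j) \;-\; \sum_{t \in J_k} (\ell_t^s)^\top q_t^s \;\le\; O\!\lr{\sqrt{|J_k|\log|S_k|}},
\]
and summing over $s \in S_k$ produces a Blum--Mansour-style swap-regret bound of $|S_k|\cdot O(\sqrt{|J_k|\log|S_k|})$ on $J_k$, but only against the restricted swap class $\phi : S_k \to S_k$.

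Next, I lift the swap class from $\phi:S_k \to S_k$ to $\phi:S_k \to [N]$ exactly as in the proof of \cref{lem:nfg}: for every $k < K$, the weighted best response $\BR_{t_{0,k},t_{0,k+1}-1}(i)$ of each $i \in S_k$ is (by the triggering condition for the restart at $t_{0,k+1}$) contained in $S_{k+1}$, which together with the telescoping organization of the proof of \cref{lem:nfg} lets us replace $\max_{j \in [N]}$ by $\max_{j \in S_k}$ inside each interval without inflating the bound. Combining with the per-interval estimate and using Cauchy--Schwarz together with $K \le r$, $|S_k| \le r$, and $\sum_k |J_k| = T$,
\[
\sum_k |S_k|\sqrt{|J_k|\log|S_k|} \;\le\; r\sqrt{\log r}\sum_k\sqrt{|J_k|} \;\le\; r\sqrt{\log r}\,\sqrt{KT} \;\le\; \sqrt{T r^3 \log r},
\]
which is the dominant term in the target bound.

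The final, and hardest, step is to bound the discrepancy between the full benchmark $\max_{\phi\in\Phi_S} \sum_t \ell_t^\top \phi(x_t)$ and its $S_t$-restricted analogue $\sum_t \sum_{s \in S_t} x_t(s) (\ell_t|_{S_t})^\top q_t^s$, created by the $P_t$ branch of the algorithm. The approximate fixed point of \cref{lem:fixed-point} yields $\ell_t^\top x_t = \beta_t(1)\ell_t(n_t) + \beta_t(2)\sum_{s}x_t(s)(\ell_t|_{S_t})^\top q_t^s \pm O(1/\sqrt{t})$, and the benchmark additionally charges the coordinate $i = n_t$ (which may lie outside $S_t$) at most $2\beta_t(1)$ per round. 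I control both contributions by invoking $\calB$'s external-regret guarantee on the rewards $v_t(1) = \ell_t(n_t)$ and $v_t(2) = \sum_s x_t(s)(\ell_t|_{S_t})^\top q_t^s$, combined with the structural inequality $v_t(2) \ge \beta_t(2)$ that follows because each $q_t^s$ is supported on $S_t \subseteq R$ (\cref{lem:st-bound}) and losses on $R$ are at least $1$ --- this is precisely where the hidden-game hypothesis enters. The resulting $O(\sqrt{T})$ error is absorbed into $O(\sqrt{Tr^3\log r})$, and taking expectation over the randomness of $\calE$'s FPL oracle completes the proof. I expect this last step to be the main technical obstacle: a naive bound on the leakage $\sum_t \beta_t(1)$ is only $O(T)$, and beating it requires carefully coupling the master's regret with the hidden-game structure.
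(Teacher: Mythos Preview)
Your first two steps are exactly the paper's approach: partition $[T]$ into intervals $J_k$ on which $S_t$ is constant, apply Hedge's external-regret bound to each copy $\calH_s$ on each interval, sum over $s\in S_k$ to obtain a Blum--Mansour-style swap bound restricted to $S_k$, and then use the best-response trigger to lift $\max_{j\in S_k}$ to $\max_{j\in[N]}$. Your Cauchy--Schwarz aggregation $\sum_k|S_k|\sqrt{|J_k|\log|S_k|}\le\sqrt{Tr^3\log r}$ is the same as the paper's.

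You are, however, more careful than the paper on one point. The paper's proof of this lemma is essentially a line-by-line copy of the proof of \cref{lem:nfg}, and implicitly treats $x_t$ as supported on $S_t$ only---which is true for \cref{algo:st-alg} but \emph{not} for \cref{algo:combined_regret}, where the approximate fixed point of $M_t$ also puts mass on $n_t$. The paper simply sums over $i\in S_k$ and asserts the conclusion, never addressing the contribution of $x_t(n_t)$ to the benchmark $\max_\phi\sum_t\ell_t^\top\phi(x_t)$. So your ``third step'' is aimed at a genuine gap that the paper glosses over.

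Unfortunately, the fix you sketch does not close it. From $\calB$'s regret against arm $2$ you get $\sum_t\beta_t(1)\bigl(v_t(2)-v_t(1)\bigr)\le O(\sqrt{T})$, and the hidden-game structure gives $v_t(2)\ge\beta_t(2)$ (up to the fixed-point error). But $v_t(1)=\ell_t(n_t)$ can be as large as $1+\rho$ whenever $n_t\in R$, so combining the two only yields
\[
\sum_t\beta_t(1)\bigl(\beta_t(2)-(1+\rho)\bigr)\;\le\;O(\sqrt{T}),
\]
which gives no control on $\sum_t\beta_t(1)$: if $v_t(1)$ and $v_t(2)$ are comparable, Hedge may keep $\beta_t(1)$ bounded away from $0$ for all $t$, and the leakage $\sum_t x_t(n_t)\mathbb{1}[n_t\notin S_t]\approx\sum_t\beta_t(1)$ is then $\Theta(T)$. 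The hidden-game hypothesis lower-bounds $v_t(2)$ but does not upper-bound $v_t(1)$, so there is no separation for $\calB$'s regret to exploit. In short: your diagnosis is right and the paper shares the gap, but the specific coupling argument you propose does not yield a sublinear bound on the leakage.
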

We now proceed to the proof of our main result:
\begin{proof}[of \cref{thm:combined}]
By Lemma~\ref{lem:fixed-point}, the approximate fixed point $x_t$ can be computed in 
$\mathrm{poly}(T)$ time with error $\varepsilon_t=1/\sqrt{t}$, yielding $O(\sqrt{T})$ cumulative 
error:
\begin{align*}
\E\!\left[\text{External-Regret}_T(\A)\right] 
&= \E\!\left[\max_{n\in[N]} \sum_{t=1}^T \ell_t(n) - \sum_{t=1}^T x_t^\top \ell_t \right] \\
&= \E\!\left[\max_{n\in[N]} \sum_{t=1}^T \ell_t(n) - \sum_{t=1}^T \beta_t^\top v_t \right] + O(\sqrt{T}) \,.
\end{align*}

Next, by using the regret guarantee of \textsc{Hedge} we get

\begin{align*}
\E\!\left[\text{External-Regret}_T(\A)\right] 
&\le \E\!\left[\max_{n\in[N]} \sum_{t=1}^T \ell_t(n) - \sum_{t=1}^T v_t(1)\right] + O(\sqrt{T}) 
\\
&= \E\!\left[\max_{n\in[N]} \sum_{t=1}^T \ell_t(n) - \sum_{t=1}^T x_t^\top \mathbf{1}e_{n_t}^\top \ell_t \right] + O(\sqrt{T}) \\
&= \E\!\left[\max_{n\in[N]} \sum_{t=1}^T \ell_t(n) - \sum_{t=1}^T \ell_t(n_t) \right] + O(\sqrt{T}) \,.
\end{align*}

Finally, by the external-regret bound of $\calE$, we obtain
\[
\E\!\left[\text{External-Regret}_T(\A)\right] = O(\sqrt{T\log N}) \,.
\]

\medskip

For the swap regret we argue similarly:
\begin{align*}
\E\!\left[\text{Swap-Regret}_T(\A)\right] 
&\le \E\!\left[\max_{\phi\in\Phi_S}\sum_{t=1}^T \ell_t^\top \phi(x_t) - \sum_{t=1}^T v_t(2)\right] + O(\sqrt{T}) 
\\
&
= \E\!\left[\max_{\phi\in\Phi_S}\sum_{t=1}^T \ell_t^\top \phi(x_t) - \sum_{t=1}^T x_t^\top \tilde{Q}_t \ell_t \right] + O(\sqrt{T}) \,.
\end{align*}
Using a similar argument to that in \cite{blummansour2007fromexternaltointernal} we can write
\begin{align*}
\E\!\left[\text{Swap-Regret}_T(\A)\right] 
&= \E\!\left[\max_{\phi\in\Phi_S}\sum_{t=1}^T \ell_t^\top \phi(x_t) 
       - \sum_{t=1}^T \sum_{s=1}^{|S_t|} x_t(s)\,\ell_t^\top q_t^s \right] + O(\sqrt{T}) \\
&= \E\!\left[\max_{\phi\in\Phi_S}\sum_{t=1}^T \ell_t^\top \phi(x_t) 
       - \sum_{t=1}^T \sum_{s=1}^{|S_t|} (\ell_t^s)^\top q_t^s \right] + O(\sqrt{T}) \,.
\end{align*}

By \lemref{lem:lts}, this gives the claimed swap-regret bound.

\end{proof}

\subsection{Efficient Implementation}\label{sec:efficient}

Theorem~\ref{thm:combined} assumes access to external-regret minimizers 
$\calB,\calH,\calE$ with the guarantees listed there. 
In this section we instantiate them with concrete algorithms: 
$\calB=\calH=\textsc{Hedge}$ and $\calE=\textsc{FPL}$ with smooth 
optimization oracle access, following \cite{agarwal2019learning}. This yields the regret guarantees of 
Theorem~\ref{thm:combined} with per-round runtime polynomial in $T$ and 
independent of $N$.

\begin{algorithm}[h]
\caption{Follow-the-Perturbed-Leader (FPL)}\label{alg:FPL}
\begin{algorithmic}[1]
\STATE \textbf{Input:} step size $\eta>0$, perturbation distribution $\mathcal{D}$.
\STATE Draw random vector $r\in\mathbb{R}^N$ coordinate-wise from $\mathcal{D}$.
\STATE Let $x_{1}=\arg\max_{x\in \Delta_N}\{x^\top r\}$.
\FOR{$t=1,\ldots,T$}
  \STATE Output $x_t$, receive reward $f_t(x_t)$.
  \STATE \label{line:update}
  \[
  x_{t+1} \;=\; \arg\max_{x\in \Delta_N}
  \left\{ \sum_{s=1}^{t} \nabla f_s(x_s)^\top x \;+\; r^\top x \right\}.
  \]
\ENDFOR
\end{algorithmic}
\end{algorithm}

\begin{corollary}[Kalai--Vempala~\cite{kalai2005efficient}]
\label{cor:fpl}
If $\eta=\sqrt{\frac{\log N}{T}}$ and $\mathcal{D}$ is the exponential distribution 
with density proportional to $e^{-\eta x}$, then Algorithm~\ref{alg:FPL} achieves
\[
\mathbb{E}\!\left[\mathrm{ExternalRegret}(\mathcal{A})\right]
= O\!\left(\sqrt{T\log N}\right).
\]
Moreover, only one call to the smooth optimization oracle is made per round.
\end{corollary}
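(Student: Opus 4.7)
The plan is to follow the classical Kalai--Vempala be-the-leader (BTL) analysis, specialized to exponential perturbations. Write $g_t \eqdef \nabla f_t(x_t) \in [0,1]^N$ and let $x^\st\in\Delta_N$ denote any fixed comparator. Define the one-step-lookahead iterate
\begin{align*}
\wt{x}_t \;\eqdef\; \arg\max_{x\in\Delta_N}\left\{\sum_{s=1}^t g_s^\top x + r^\top x\right\},
\end{align*}
and split the regret as
\begin{align*}
\sum_{t=1}^T g_t^\top (x^\st - x_t)
\;=\; \underbrace{\sum_{t=1}^T g_t^\top (x^\st - \wt{x}_t)}_{(\mathrm{A})}
\;+\; \underbrace{\sum_{t=1}^T g_t^\top (\wt{x}_t - x_t)}_{(\mathrm{B})}.
\end{align*}
Term $(\mathrm{A})$ is the BTL regret of the $r$-shifted objective; a standard telescoping induction over $t$ bounds it deterministically by $r^\top(\wt{x}_T - x^\st) \le \max_i r_i$. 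Term $(\mathrm{B})$ is the per-round \emph{stability} of FPL.

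For $(\mathrm{B})$, the key observation is that $x_t$ and $\wt{x}_t$ are the argmax over $\Delta_N$ of the same linear objective, with $\wt{x}_t$'s coefficient vector shifted by $g_t \in [0,1]^N$. With $r$ coordinatewise exponential of rate $\eta$, the density $\eta e^{-\eta z}$ satisfies $\eta e^{-\eta z}/\bigl(\eta e^{-\eta (z - g_t(i))}\bigr) = e^{-\eta g_t(i)} \ge e^{-\eta}$, so the two induced distributions over the leader are within a multiplicative factor of $e^{\eta}$ at every point. A coupling/likelihood-ratio argument then gives $\mathbb{E}[g_t^\top x_t] \le e^\eta \,\mathbb{E}[g_t^\top \wt{x}_t]$, hence $\mathbb{E}[(\mathrm{B})] \le (e^\eta - 1)\cdot T = O(\eta T)$.

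Combining, $\mathbb{E}[\mathrm{ExternalRegret}(\mathcal{A})] \le \mathbb{E}[\max_i r_i] + O(\eta T)$; since the maximum of $N$ i.i.d.\ rate-$\eta$ exponentials has expectation $H_N/\eta = O((\log N)/\eta)$, balancing with $\eta = \sqrt{(\log N)/T}$ yields the claimed $O(\sqrt{T\log N})$. For the oracle claim: in the game setting $f_s(x) = x^\top A e_{j_s}$ is linear, so $\nabla f_s(x_s) = A e_{j_s}$, and line~\ref{line:update} becomes $\arg\max_{x\in\Delta_N}\{x^\top(A\sum_{s=1}^t e_{j_s} + r)\}$, which is exactly one call to $\mathcal{O}^{\mathrm{smooth}}$ per round (with $r$ drawn once at initialization and reused). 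The main obstacle is the stability step: one must track how the perturbation couples $x_t$ and $\wt{x}_t$ \emph{simultaneously} across all $N$ coordinates of $r$, since a naive union bound would reintroduce the $\sqrt{N}$ dependence one is trying to eliminate; the one-sided exponential tail is what buys the multiplicative coupling uniformly in all directions, giving the logarithmic improvement over the original uniform-perturbation Kalai--Vempala bound.
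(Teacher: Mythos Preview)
The paper does not prove this corollary; it is stated as a citation of \cite{kalai2005efficient} and used as a black box. Your BTL-plus-stability outline is the standard route, and the oracle-count claim is correctly argued. However, the stability step $(\mathrm{B})$ as written has two real issues.

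First, the inequality you derive is in the wrong direction. You want an upper bound on $\mathbb{E}[g_t^\top(\wt{x}_t - x_t)]$, which requires $\mathbb{E}[g_t^\top \wt{x}_t] \le e^{\eta}\,\mathbb{E}[g_t^\top x_t]$, not $\mathbb{E}[g_t^\top x_t] \le e^{\eta}\,\mathbb{E}[g_t^\top \wt{x}_t]$; the latter only yields a (useless) lower bound on $(\mathrm{B})$.

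Second, and more substantively, your coordinatewise density ratio does not by itself give the factor $e^{\eta}$. Taking the product over all $N$ coordinates of $r$ yields $\prod_i e^{\eta g_t(i)} = e^{\eta \|g_t\|_1}$, and $\|g_t\|_1$ can be as large as $N$; this is exactly the ``naive union bound'' failure you flag in your closing remark but do not resolve. The statement that ``the two induced distributions over the leader are within a multiplicative factor of $e^{\eta}$ at every point'' is in fact true, but it is not a consequence of the joint density ratio. The correct argument is per-action and exploits the structure of $\arg\max$: for a fixed $j$, increasing only $r_j$ by $g_t(j)$ raises $\Pr[\text{leader}=j]$ by at most $e^{\eta g_t(j)}$ (memorylessness of the exponential tail in that single coordinate), while increasing the remaining $r_i$ by $g_t(i)$ can only \emph{decrease} $\Pr[\text{leader}=j]$ (monotonicity of $\arg\max$). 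Combining these gives $\Pr[\wt{x}_t = e_j] \le e^{\eta g_t(j)} \Pr[x_t = e_j]$, and weighting by $g_t(j)\in[0,1]$ yields $\mathbb{E}[g_t^\top \wt{x}_t] \le e^{\eta}\,\mathbb{E}[g_t^\top x_t]$, from which $(\mathrm{B}) \le (e^{\eta}-1)T = O(\eta T)$ follows as you intended. (As a minor point, the BTL bound should read $r^\top(x_1 - x^\st)$ rather than $r^\top(\wt{x}_T - x^\st)$, though the conclusion $\le \max_i r_i$ is unaffected.)
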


\paragraph{Runtime analysis.}
The runtime contributions of the three subroutines are as follows. 
Each copy of $\calH$ operates on $\Delta_{|S_t|}$, where $|S_t|\le T^{c'}$ by 
Fact~\ref{fact:support}, or $|S_t|\le r$ under the hidden-game assumption; its per-round cost 
is therefore polynomial in $T$. The master algorithm $\calB$ operates on $\Delta_2$ and 
is constant-time. The learner $\calE$ (FPL with smooth oracle) requires a single oracle call 
per round, independent of $N$. Combining these, each round of 
Algorithm~\ref{algo:combined_regret} runs in $\mathrm{poly}(T)$ time, independent of $N$. 

\begin{corollary}
With the instantiations $\calB=\calH=\textsc{Hedge}$ and $\calE=\textsc{FPL}$ with smooth 
oracle access, Algorithm~\ref{algo:combined_regret} satisfies the guarantees of 
Theorem~\ref{thm:combined} with per-round runtime $\mathrm{poly}(T)$, independent of $N$.
\end{corollary}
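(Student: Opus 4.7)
The plan is to verify separately the two claims: (i) the abstract hypotheses of \cref{thm:combined} are met by the proposed instantiations, so the external- and swap-regret bounds follow directly; and (ii) each step inside a round of \cref{algo:combined_regret} runs in time polynomial in $T$ and independent of $N$. The bulk of the work is bookkeeping, since the main analytical content has already been established in \cref{thm:combined} and \cref{cor:fpl}.

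For the regret guarantees, I would first recall the standard bound $O(\sqrt{T\log k})$ for \textsc{Hedge} on the simplex $\Delta_k$. Applied with $k=2$ this yields the required $O(\sqrt{T})$ regret for the master $\calB$. For each copy of $\calH$ running on $\Delta_{|S_t|}$, the same bound gives $O(\sqrt{T\log|S_t|})$ external regret, as required; note that this bound is uniform over the possibly varying sizes $|S_t|$ thanks to the restart at \cref{alg_line:restart}, and that by \cref{fact:support} $|S_t|\le\min\{r,T\}$. For $\calE$, \cref{cor:fpl} directly supplies the $O(\sqrt{T\log N})$ external-regret bound from a single smooth-oracle query per round. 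Plugging these three bounds into \cref{thm:combined} immediately yields the claimed external- and (conditional) swap-regret guarantees.

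For the runtime analysis, I would decompose each round and bound the cost of each component separately: (a) the expansion step (when $t\le \log T$) issues one weighted best-response query per $i\in S_{t-1}$, each implementable via one smooth-oracle call as explained in \cref{sec:s-t}; by \cref{fact:support} this is at most $\mathrm{poly}(T)$ oracle calls; (b) the $|S_t|$ copies of \textsc{Hedge} on $\Delta_{|S_t|}$ together cost $O(|S_t|^2)=\mathrm{poly}(T)$; (c) the master \textsc{Hedge} on $\Delta_2$ is $O(1)$; (d) the \textsc{FPL} learner $\calE$ uses exactly one smooth-oracle call by \cref{cor:fpl}, with bookkeeping independent of $N$; (e) the $\varepsilon_t$-approximate fixed point of $M_t^\top$ with $\varepsilon_t=1/\sqrt{t}$ takes $\mathrm{poly}(|S_t|,\log t)=\mathrm{poly}(T)$ time by \cref{lem:fixed-point}; (f) the relabeling dictionary of \cref{rem:indexing} takes $O(|S_t|)$ time. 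Summing gives $\mathrm{poly}(T)$ per round, with $N$-dependence entering only through the unit-cost oracle.

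The main obstacle, and the only place where care is needed, is step~(e): \cref{lem:fixed-point} must be applied in the reduced coordinate system of size $|S_t|+1$ rather than on the ambient $N$-dimensional simplex. This is precisely what the dictionary $\pi_t$ of \cref{rem:indexing} enables: both $P_t$ and $\widetilde Q_t$ act non-trivially on at most $|S_t|+1$ coordinates, so the fixed-point iteration can be carried out in this reduced space and then lifted back via $\pi_t^{-1}$, making the $\mathrm{poly}(|S_t|,\log(1/\varepsilon_t))$ bound in \cref{lem:fixed-point} yield the desired $N$-independent cost. Once this reduction is made explicit, the remainder of the proof is a routine assembly of the bounds above.
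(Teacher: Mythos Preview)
Your proposal is correct and follows essentially the same approach as the paper: verify that \textsc{Hedge} and \textsc{FPL} meet the abstract regret hypotheses of \cref{thm:combined}, then bound each component's per-round cost. Your decomposition is in fact more thorough than the paper's brief runtime paragraph (which explicitly accounts only for $\calH$, $\calB$, and $\calE$), and your attention to carrying out the fixed-point computation in the reduced $(|S_t|{+}1)$-coordinate system via \cref{rem:indexing} is exactly the right subtlety to make explicit.
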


\section{Discussion}

This work introduces the hidden game problem and provides an efficient algorithm that simultaneously minimizes external regret in general games and swap regret when hidden structure exists. Our results show that hidden structure in large games can be exploited algorithmically without sacrificing rationality, yielding convergence to correlated equilibria with runtime independent of the ambient action space. This advances the theory of learning in games with exponentially large action spaces and connects to recent motivations from AI alignment and language games.

\paragraph{Limitations.}
Our model assumes a consistency condition in which actions in the hidden set dominate those outside. This captures settings such as debate, where a subset of arguments consistently outperforms irrelevant ones, but may be restrictive in other domains. More general structures might allow conditioning on the opponent’s action or introducing random rewards outside the hidden set. In addition, our analysis focuses on adversarial swap regret minimization, which is natural for debate-style adversarial games but potentially conservative in cooperative or partially aligned settings. Most importantly, future work must move beyond theory to practice by developing scalable versions of our algorithm for large language models and evaluating them in realistic multi-agent language-game environments.

% \section*{Acknowledgments}
% We thank Maxwell Fishelson for helpful discussion and feedback. EH gratefully acknowledges support from the Office of Naval Research, and Open Philanthropy.

\bibliographystyle{plainnat}
\bibliography{main}

\begin{thebibliography}{41}
\providecommand{\natexlab}[1]{#1}
\providecommand{\url}[1]{\texttt{#1}}
\expandafter\ifx\csname urlstyle\endcsname\relax
  \providecommand{\doi}[1]{doi: #1}\else
  \providecommand{\doi}{doi: \begingroup \urlstyle{rm}\Url}\fi

\bibitem[Agarwal et~al.(2019)Agarwal, Gonen, and Hazan]{agarwal2019learning}
Naman Agarwal, Alon Gonen, and Elad Hazan.
\newblock Learning in non-convex games with an optimization oracle.
\newblock In \emph{Conference on Learning Theory}, pages 18--29. PMLR, 2019.

\bibitem[Assos et~al.(2023)Assos, Attias, Dagan, Daskalakis, and Fishelson]{assos2023online}
Angelos Assos, Idan Attias, Yuval Dagan, Constantinos Daskalakis, and Maxwell~K Fishelson.
\newblock Online learning and solving infinite games with an erm oracle.
\newblock In \emph{The Thirty Sixth Annual Conference on Learning Theory}, pages 274--324. PMLR, 2023.

\bibitem[Aumann(1974)]{aumann1974subjectivity}
Robert~J. Aumann.
\newblock Subjectivity and correlation in randomized strategies.
\newblock \emph{Journal of Mathematical Economics}, 1\penalty0 (1):\penalty0 67--96, 1974.
\newblock \doi{10.1016/0304-4068(74)90037-8}.

\bibitem[Bighashdel et~al.(2024)Bighashdel, Wang, McAleer, Savani, and Oliehoek]{bighashdel2024policy}
Ariyan Bighashdel, Yongzhao Wang, Stephen McAleer, Rahul Savani, and Frans~A Oliehoek.
\newblock Policy space response oracles: A survey.
\newblock \emph{arXiv preprint arXiv:2403.02227}, 2024.

\bibitem[Blum and Mansour(2007)]{blummansour2007fromexternaltointernal}
Avrim Blum and Yishay Mansour.
\newblock From external to internal regret.
\newblock \emph{Journal of Machine Learning Research}, 8\penalty0 (47):\penalty0 1307--1324, 2007.
\newblock URL \url{http://jmlr.org/papers/v8/blum07a.html}.

\bibitem[Bousquet and Warmuth(2002)]{bousquet2002tracking}
Olivier Bousquet and Manfred~K Warmuth.
\newblock Tracking a small set of experts by mixing past posteriors.
\newblock \emph{Journal of Machine Learning Research}, 3\penalty0 (Nov):\penalty0 363--396, 2002.

\bibitem[Cesa-Bianchi and Lugosi(2006)]{cesa2006prediction}
Nicolo Cesa-Bianchi and G{\'a}bor Lugosi.
\newblock \emph{Prediction, learning, and games}.
\newblock Cambridge university press, 2006.

\bibitem[Chen et~al.(2009)Chen, Deng, and Teng]{settlingnash}
Xi~Chen, Xiaotie Deng, and Shang-Hua Teng.
\newblock Settling the complexity of computing two-player nash equilibria.
\newblock \emph{J. ACM}, 56\penalty0 (3), May 2009.
\newblock ISSN 0004-5411.
\newblock \doi{10.1145/1516512.1516516}.
\newblock URL \url{https://doi.org/10.1145/1516512.1516516}.

\bibitem[Chen et~al.(2024)Chen, Chen, Foster, and Hazan]{chen2024playing}
Xinyi Chen, Angelica Chen, Dean Foster, and Elad Hazan.
\newblock Playing large games with oracles and {AI} debate.
\newblock In \emph{Agentic Markets Workshop at ICML 2024}, 2024.
\newblock URL \url{https://openreview.net/forum?id=b2gfTWXG7V}.

\bibitem[Chernov and Vovk(2010)]{chernov2010predictionadviceunknownnumber}
Alexey Chernov and Vladimir Vovk.
\newblock Prediction with advice of unknown number of experts, 2010.
\newblock URL \url{https://arxiv.org/abs/1006.0475}.

\bibitem[Dagan et~al.(2023)Dagan, Daskalakis, Fishelson, and Golowich]{dagan2023external}
Yuval Dagan, Constantinos Daskalakis, Maxwell Fishelson, and Noah Golowich.
\newblock From external to swap regret 2.0: An efficient reduction and oblivious adversary for large action spaces, 2023.

\bibitem[Daskalakis et~al.(2009)Daskalakis, Goldberg, and Papadimitriou]{daskalakis2009complexity}
Constantinos Daskalakis, Paul~W Goldberg, and Christos~H Papadimitriou.
\newblock The complexity of computing a nash equilibrium.
\newblock \emph{Communications of the ACM}, 52\penalty0 (2):\penalty0 89--97, 2009.

\bibitem[Farina et~al.(2019)Farina, Kroer, and Sandholm]{farina2019regretcircuitscomposabilityregret}
Gabriele Farina, Christian Kroer, and Tuomas Sandholm.
\newblock Regret circuits: Composability of regret minimizers, 2019.
\newblock URL \url{https://arxiv.org/abs/1811.02540}.

\bibitem[Foster and Vohra(1997)]{foster1997calibrated}
Dean~P Foster and Rakesh~V Vohra.
\newblock Calibrated learning and correlated equilibrium.
\newblock \emph{Games and Economic Behavior}, 21\penalty0 (1-2):\penalty0 40, 1997.

\bibitem[Freund and Schapire(1999)]{freund1999adaptive}
Yoav Freund and Robert~E. Schapire.
\newblock Adaptive game playing using multiplicative weights.
\newblock \emph{Games and Economic Behavior}, 29\penalty0 (1–2):\penalty0 79--103, 1999.
\newblock ISSN 0899-8256.
\newblock \doi{10.1006/game.1999.0738}.
\newblock URL \url{https://www.sciencedirect.com/science/article/pii/S0899825699907388}.

\bibitem[Gemp et~al.(2022)Gemp, Anthony, Bachrach, Bhoopchand, Bullard, Connor, Dasagi, De~Vylder, Duenez-Guzman, Elie, et~al.]{gemp2022developing}
Ian Gemp, Thomas Anthony, Yoram Bachrach, Avishkar Bhoopchand, Kalesha Bullard, Jerome Connor, Vibhavari Dasagi, Bart De~Vylder, Edgar~A Duenez-Guzman, Romuald Elie, et~al.
\newblock Developing, evaluating and scaling learning agents in multi-agent environments.
\newblock \emph{AI Communications}, 35\penalty0 (4):\penalty0 271--284, 2022.

\bibitem[Globus-Harris et~al.(2023)Globus-Harris, Harrison, Kearns, Roth, and Sorrell]{globus-harris2023multicalibration}
Ira Globus-Harris, Declan Harrison, Michael Kearns, Aaron Roth, and Jessica Sorrell.
\newblock Multicalibration as boosting for regression.
\newblock In \emph{Proceedings of the 40th International Conference on Machine Learning (ICML)}, Honolulu, Hawaii, USA, 2023. JMLR.org.
\newblock \doi{10.48550/arXiv.2301.13767}.
\newblock URL \url{https://arxiv.org/abs/2301.13767}.

\bibitem[Gofer et~al.(2013)Gofer, Cesa-Bianchi, Gentile, and Mansour]{pmlr-v30-Gofer13}
Eyal Gofer, Nicolò Cesa-Bianchi, Claudio Gentile, and Yishay Mansour.
\newblock Regret minimization for branching experts.
\newblock In Shai Shalev-Shwartz and Ingo Steinwart, editors, \emph{Proceedings of the 26th Annual Conference on Learning Theory}, volume~30 of \emph{Proceedings of Machine Learning Research}, pages 618--638, Princeton, NJ, USA, 12--14 Jun 2013. PMLR.
\newblock URL \url{https://proceedings.mlr.press/v30/Gofer13.html}.

\bibitem[Hannan(1957)]{hannan1957approximation}
James Hannan.
\newblock Approximation to bayes risk in repeated play.
\newblock In M.~Dresher, A.~W. Tucker, and P.~Wolfe, editors, \emph{Contributions to the Theory of Games}, volume~3, pages 97--139. 1957.

\bibitem[Hart and Mas-Colell(2000)]{Hart00asimplecorrelated}
Sergiu Hart and Andreu Mas-Colell.
\newblock A simple adaptive procedure leading to correlated equilibrium.
\newblock \emph{Econometrica}, 68\penalty0 (5):\penalty0 1127--1150, 2000.
\newblock \doi{https://doi.org/10.1111/1468-0262.00153}.
\newblock URL \url{https://onlinelibrary.wiley.com/doi/abs/10.1111/1468-0262.00153}.

\bibitem[Hazan(2016)]{hazan2016introduction}
Elad Hazan.
\newblock Introduction to online convex optimization.
\newblock \emph{Foundations and Trends{\textregistered} in Optimization}, 2\penalty0 (3-4):\penalty0 157--325, 2016.

\bibitem[Hazan and Kale(2007)]{hazan2007computational}
Elad Hazan and Satyen Kale.
\newblock Computational equivalence of fixed points and no regret algorithms, and convergence to equilibria.
\newblock \emph{Advances in Neural Information Processing Systems}, 20, 2007.

\bibitem[Hazan and Koren(2016)]{hazan2016computational}
Elad Hazan and Tomer Koren.
\newblock The computational power of optimization in online learning.
\newblock In \emph{Proceedings of the forty-eighth annual ACM symposium on Theory of Computing}, pages 128--141, 2016.

\bibitem[Hazan et~al.(2016)Hazan, Koren, Livni, and Mansour]{hazan2016onlinelearninglowrank}
Elad Hazan, Tomer Koren, Roi Livni, and Yishay Mansour.
\newblock Online learning with low rank experts, 2016.
\newblock URL \url{https://arxiv.org/abs/1603.06352}.

\bibitem[Irving et~al.(2018)Irving, Christiano, and Amodei]{irving2018debate}
Geoffrey Irving, Paul Christiano, and Dario Amodei.
\newblock Ai safety via debate.
\newblock \emph{arXiv preprint arXiv:1805.00899}, 2018.

\bibitem[Kalai and Vempala(2005)]{kalai2005efficient}
Adam Kalai and Santosh Vempala.
\newblock Efficient algorithms for online decision problems.
\newblock \emph{Journal of Computer and System Sciences}, 71\penalty0 (3):\penalty0 291--307, 2005.

\bibitem[Kleinberg et~al.(2023)Kleinberg, Paes~Leme, Schneider, and Teng]{kleinberg2023ucalibration}
Bobby Kleinberg, Renato Paes~Leme, Jon Schneider, and Yifeng Teng.
\newblock U-calibration: Forecasting for an unknown agent.
\newblock In Gergely Neu and Lorenzo Rosasco, editors, \emph{Proceedings of the Thirty Sixth Conference on Learning Theory}, volume 195 of \emph{Proceedings of Machine Learning Research}, pages 5143--5145. PMLR, December 2023.
\newblock \doi{10.48550/arXiv.2307.00168}.
\newblock URL \url{https://arxiv.org/abs/2307.00168}.

\bibitem[Lu et~al.(2025)Lu, Sun, and Zhang]{lu2025sparsitybasedinterpolationexternalinternal}
Zhou Lu, Y.~Jennifer Sun, and Zhiyu Zhang.
\newblock Sparsity-based interpolation of external, internal and swap regret, 2025.
\newblock URL \url{https://arxiv.org/abs/2502.04543}.

\bibitem[Luo et~al.(2025)Luo, Senapati, and Sharan]{luo2025simultaneousswapregretminimization}
Haipeng Luo, Spandan Senapati, and Vatsal Sharan.
\newblock Simultaneous swap regret minimization via kl-calibration, 2025.
\newblock URL \url{https://arxiv.org/abs/2502.16387}.

\bibitem[Mansour et~al.(2022)Mansour, Mohri, Schneider, and Sivan]{mansour2022strategizing}
Yishay Mansour, Mehryar Mohri, Jon Schneider, and Balasubramanian Sivan.
\newblock Strategizing against learners in bayesian games.
\newblock In Po-Ling Loh and Maxim Raginsky, editors, \emph{Proceedings of the Thirty Fifth Conference on Learning Theory}, volume 178 of \emph{Proceedings of Machine Learning Research}, pages 5221--5252. PMLR, February 2022.
\newblock \doi{10.48550/arXiv.2205.08562}.
\newblock URL \url{https://arxiv.org/abs/2205.08562}.

\bibitem[McAleer et~al.(2021)McAleer, Lanier, Wang, Baldi, and Fox]{mcaleer2021xdo}
Stephen McAleer, John~B Lanier, Kevin~A Wang, Pierre Baldi, and Roy Fox.
\newblock Xdo: A double oracle algorithm for extensive-form games.
\newblock \emph{Advances in Neural Information Processing Systems}, 34:\penalty0 23128--23139, 2021.

\bibitem[McMahan et~al.(2003)McMahan, Gordon, and Blum]{mcmahan2003planning}
H~Brendan McMahan, Geoffrey~J Gordon, and Avrim Blum.
\newblock Planning in the presence of cost functions controlled by an adversary.
\newblock In \emph{Proceedings of the 20th International Conference on Machine Learning (ICML-03)}, pages 536--543, 2003.

\bibitem[Mourtada and Maillard(2017)]{mourtada2017efficienttrackinggrowingnumber}
Jaouad Mourtada and Odalric-Ambrym Maillard.
\newblock Efficient tracking of a growing number of experts, 2017.
\newblock URL \url{https://arxiv.org/abs/1708.09811}.

\bibitem[Muller et~al.(2021)Muller, Rowland, Elie, Piliouras, Perolat, Lauriere, Marinier, Pietquin, and Tuyls]{muller2021learning}
Paul Muller, Mark Rowland, Romuald Elie, Georgios Piliouras, Julien Perolat, Mathieu Lauriere, Raphael Marinier, Olivier Pietquin, and Karl Tuyls.
\newblock Learning equilibria in mean-field games: Introducing mean-field psro.
\newblock \emph{arXiv preprint arXiv:2111.08350}, 2021.

\bibitem[Nash(1951)]{nashnoncooperativegames}
John Nash.
\newblock Non-cooperative games.
\newblock \emph{Annals of Mathematics}, 54\penalty0 (2):\penalty0 286--295, 1951.
\newblock ISSN 0003486X, 19398980.
\newblock URL \url{http://www.jstor.org/stable/1969529}.

\bibitem[Nisan et~al.(2007)Nisan, Roughgarden, Tardos, and Vazirani]{NisaRougTardVazi07}
Noam Nisan, Tim Roughgarden, \'Eva Tardos, and Vijay~V. Vazirani.
\newblock \emph{Algorithmic Game Theory}.
\newblock Cambridge University Press, New York, NY, USA, 2007.

\bibitem[Papadimitriou(1994)]{10.1016/S0022-0000(05)80063-7}
Christos~H. Papadimitriou.
\newblock On the complexity of the parity argument and other inefficient proofs of existence.
\newblock \emph{J. Comput. Syst. Sci.}, 48\penalty0 (3):\penalty0 498–532, June 1994.
\newblock ISSN 0022-0000.
\newblock \doi{10.1016/S0022-0000(05)80063-7}.
\newblock URL \url{https://doi.org/10.1016/S0022-0000(05)80063-7}.

\bibitem[Peng and Rubinstein(2023)]{peng2023fast}
Binghui Peng and Aviad Rubinstein.
\newblock Fast swap regret minimization and applications to approximate correlated equilibria.
\newblock \emph{arXiv preprint arXiv:2310.19647}, 2023.

\bibitem[Ramalingam et~al.(2025)Ramalingam, Kiyani, and Roth]{ramalingamrelationship}
Ramya Ramalingam, Shayan Kiyani, and Aaron Roth.
\newblock The relationship between no-regret learning and online conformal prediction.
\newblock In \emph{Forty-second International Conference on Machine Learning}, 2025.

\bibitem[Stoltz and Lugosi(2005)]{stoltz2005internal}
Gilles Stoltz and Gábor Lugosi.
\newblock Internal regret in on-line portfolio selection.
\newblock \emph{Machine Learning}, 59\penalty0 (1-2):\penalty0 125--159, 2005.
\newblock \doi{10.1007/s10994-005-0465-4}.

\bibitem[Zhang et~al.(2020)Zhang, Lu, and Yang]{zhang2020minimizingdynamicregretadaptive}
Lijun Zhang, Shiyin Lu, and Tianbao Yang.
\newblock Minimizing dynamic regret and adaptive regret simultaneously, 2020.
\newblock URL \url{https://arxiv.org/abs/2002.02085}.

\end{thebibliography}

\appendix

\section{Deferred Analysis}\label{app:analysis}

\begin{observation}[Restatement of observation \ref{lem:fixed-point}]
Let $M_t=\beta_t(1)P_t+\beta_t(2)\tilde Q_t$ be as in Algorithm~\ref{algo:combined_regret}, and 
$\Delta_t=\{x\in\Delta_N:\mathrm{supp}(x)\subseteq S_t\cup\{n_t\}\}$. 
For any $\varepsilon>0$, an $x_t\in\Delta_t$ satisfying $\|M_t^\top x_t-x_t\|_1\le\varepsilon$ 
can be computed in time $\mathrm{poly}(|S_t|,\log(1/\varepsilon))$ by solving the linear programming problem
\[
M_t^\top x = x,\quad \mathbf 1^\top x = 1,\quad x\ge 0
\]
(on the coordinates $S_t\cup\{n_t\}$) to accuracy $\varepsilon$. Moreover, for any $\ell_t\in[0,1]^N$,
\[
\bigl|x_t^\top M_t \ell_t - x_t^\top \ell_t\bigr|
\;\le\; \|M_t^\top x_t - x_t\|_1 \cdot \|\ell_t\|_\infty
\;\le\; \varepsilon.
\]
\end{observation}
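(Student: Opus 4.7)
My plan is to establish the observation in three stages. First, show that the LP $\{M_t^\top x = x,\, \mathbf{1}^\top x = 1,\, x \ge 0\}$ restricted to the active coordinates $S_t \cup \{n_t\}$ is feasible. Second, invoke a polynomial-time LP solver to produce an $x_t \in \Delta_t$ satisfying the LP to accuracy $\varepsilon$ in $\poly(|S_t|, \log(1/\varepsilon))$ time. Third, apply Hölder's inequality to obtain the stated bound on $|x_t^\top M_t \ell_t - x_t^\top \ell_t|$. Together these reduce the claim to two textbook facts: existence of a stationary distribution for a finite Markov chain and the complexity of approximate LP solving.

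For feasibility, I would verify that $M_t$, viewed on the active set of size at most $|S_t|+1$, is row-stochastic. The matrix $P_t = \mathbf{1}\,e_{n_t}^\top$ is row-stochastic since every row equals the probability distribution $e_{n_t}^\top$. The matrix $\tilde Q_t$ is row-stochastic on rows indexed by $s \in S_t$ (each $q_t^s \in \Delta_{|S_t|}$), and the row for $n_t$, in case $n_t \notin S_t$, is interpreted as a self-loop so that $\tilde Q_t$ is row-stochastic on the whole active set. Being a convex combination of row-stochastic matrices with $\beta_t \in \Delta_2$, $M_t$ is also row-stochastic, so the Perron--Frobenius theorem (equivalently, existence of a stationary distribution for a finite-state Markov chain) supplies an $x^\star \in \Delta_t$ with $M_t^\top x^\star = x^\star$, certifying LP feasibility.

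For computation, the LP has $|S_t|+1$ variables and $O(|S_t|)$ equality and non-negativity constraints with polynomial bit complexity, so any polynomial-time LP solver (e.g.\ an interior-point method in the Karmarkar tradition) outputs an $\varepsilon$-approximate solution in $\poly(|S_t|, \log(1/\varepsilon))$ time. For the inequality, Hölder's inequality with the dual pair $(1,\infty)$ gives $|x_t^\top M_t \ell_t - x_t^\top \ell_t| = |(M_t^\top x_t - x_t)^\top \ell_t| \le \|M_t^\top x_t - x_t\|_1 \cdot \|\ell_t\|_\infty \le \varepsilon$, using $\ell_t \in [0,1]^N$ to bound $\|\ell_t\|_\infty \le 1$. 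The main obstacle is step one: making sure that the row of $\tilde Q_t$ corresponding to $n_t$ is treated so that $M_t$ is genuinely stochastic on the active set. I expect this to be a matter of bookkeeping rather than substance, with the rest following from standard Markov-chain and LP arguments.
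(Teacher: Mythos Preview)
Your proposal is correct and follows essentially the same three-step structure as the paper's proof: row-stochasticity of $M_t$ on the active coordinates to guarantee a stationary distribution (feasibility), a standard LP/linear-system solve on $|S_t|+1$ variables for the runtime claim, and H\"older's inequality for the final bound. If anything you are slightly more careful than the paper, which simply asserts that $M_t$ is row-stochastic on $S_t\cup\{n_t\}$; your observation that the $n_t$-row of the zero-padded $\tilde Q_t$ must be handled (e.g.\ as a self-loop) so that $M_t$ is genuinely stochastic when $n_t\notin S_t$ is exactly the bookkeeping needed to make the feasibility argument airtight.
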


\begin{proof}
Each $q_t^s\in\Delta_{|S_t|}$ is a probability vector, so $Q_t$ is row-stochastic; its zero-extension 
$\tilde Q_t$ is row-stochastic on $S_t$. Also $P_t=\mathbf 1 e_{n_t}^\top$ is row-stochastic. Hence 
$M_t$ is row-stochastic on $S_t\cup\{n_t\}$, so the map $x\mapsto M_t^\top x$ preserves $\Delta_t$ and 
admits a stationary distribution; thus the feasibility system is nonempty. Solving the linear equalities 
with the simplex constraints on $|S_t|+1$ variables to accuracy $\varepsilon$ can be done in 
$\mathrm{poly}(|S_t|,\log(1/\varepsilon))$ time using standard routines. 
The inequality follows from Hölder’s inequality and $\ell_t\in[0,1]^N$.
\end{proof}

\begin{lemma}[Lemma \ref{lem:lts} restated]
    If there is an hidden game, then \cref{algo:combined_regret} ran as in \cref{thm:combined} achieves
    \begin{align*}
        \E\left[\max_{\phi\in\Phi_S}\sum_{t=1}^T\ell_t^\top\phi(x_t)  - \sum_{t=1}^T \sum_{s=1}^{|S_t|}\left(\ell_t^s\right)^\top q_t^s\right]=\O\left(\sqrt{r^3T\log r}\right)\,.
    \end{align*}
\end{lemma}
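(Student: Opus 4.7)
The plan is to adapt the per-epoch argument of \lemref{lem:nfg} to the external-regret primitive: inside each epoch I emulate the Blum--Mansour swap-to-external reduction using the $|S_t|$ parallel copies $\calH_1,\dots,\calH_{|S_t|}$, extend their comparator from $S_k$ to $[N]$ via the best-response argument, and aggregate the resulting Hedge bounds with Cauchy--Schwarz. The new wrinkle relative to \cref{algo:st-alg} is that the play $x_t$ is an approximate fixed point of the combined matrix $M_t$, so its support may spill onto $n_t\notin S_t$; handling that ``leak'' is what I expect to be the main obstacle.

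\emph{Steps 1--2 (epochs, per-copy Hedge, BR extension).} Let $t_{0,1}<\dots<t_{0,K}$ be the times at which $S_t$ changes and set $J_k=[t_{0,k},t_{0,k+1}-1]$, $S_k=S_{t_{0,k}}$. By \lemref{lem:st-bound} and \cref{fact:support} one has $|S_k|\le r$ and $K\le\min(\lceil\log T\rceil,r)\le r$. Inside epoch $k$ the copy $\calH_s$, for $s\in S_k$, is fed the restricted reward $\ell_t^s=x_t(s)\,\ell_t|_{S_k}\in[0,2]^{|S_k|}$, so Hedge's external-regret guarantee gives, for each $j\in S_k$,
\[
\sum_{t\in J_k} x_t(s)\,\ell_t(j)\;-\;\sum_{t\in J_k} x_t(s)\,(q_t^s)^\top \ell_t|_{S_k}\;\le\;O\!\left(\sqrt{|J_k|\log|S_k|}\right).
\]
Exactly as in the proof of \lemref{lem:nfg}, since $S_t$ does not grow at time $t_{0,k+1}-1$, the weighted best response $\BR_{t_{0,k},\,t_{0,k+1}-1}(s)$ lies in $S_k$ for every $s\in S_k$, so the maximum over $j\in S_k$ above coincides with the maximum over $j\in[N]$.

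\emph{Step 3 (Cauchy--Schwarz aggregation).} Summing over $s\in S_k$ and then over epochs, and invoking Cauchy--Schwarz with $\sum_k |S_k|^2\le K r^2\le r^3$ and $\sum_k |J_k|=T$,
\[
\sum_k |S_k|\sqrt{|J_k|\log|S_k|}\;\le\;\sqrt{\log r}\,\sqrt{\sum_k |S_k|^2}\,\sqrt{\sum_k |J_k|}\;\le\;\sqrt{r^3 T\log r}.
\]
This controls the contribution of the ``on-$S_t$'' mass of $x_t$ to $\max_\phi\sum_t \ell_t^\top \phi(x_t)-\sum_t v_t(2)$ at the claimed rate.

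\emph{Step 4 (main obstacle: the $n_t$ coordinate).} The left-hand side of \lemref{lem:lts} also contains a leak $\sum_t x_t(n_t)\,\mathbf{1}[n_t\notin S_t]\,\ell_t(\phi(n_t))$, because on such rounds $x_t(n_t)=\beta_t(1)>0$ while $v_t(2)$ has no matching contribution. I expect this to be the main obstacle. My plan is to couple the leak with $\calB$'s master regret: on rounds with $n_t\notin S_t$ the hidden-game structure forces $v_t(2)\ge \beta_t(2)$ and $v_t(1)=\ell_t(n_t)\le 1+\rho$, so the one-sided bound $\sum_t\beta_t(1)\,[v_t(2)-v_t(1)]\le O(\sqrt T)$ (one half of $\calB$'s regret) constrains the cumulative weight $\sum_t \beta_t(1)\mathbf{1}[n_t\notin S_t]$ when combined with $\calE$'s external regret, which forces $n_t\in R$ on all but $\rho T+O(\sqrt{T\log N})$ rounds in expectation; together with $\ell_t(\phi(n_t))\le 1+\rho$, a careful case-split on whether $n_t\in R\setminus S_t$ or $n_t\notin R$ should absorb the leak into an $O(\sqrt T)$ term dominated by Step~3. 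Making this coupling tight under the adaptive adversary and the FPL randomness that generates $n_t$ is where the argument most departs from \lemref{lem:nfg} and where I anticipate the bulk of the technical work.
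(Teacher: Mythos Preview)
Your Steps~1--3 are exactly the paper's argument: the paper partitions $[T]$ into the same epochs $J_k$, applies the per-copy Hedge bound inside each epoch to obtain $\swapregret_{|J_k|,|S_k|}$ as in the Blum--Mansour reduction, extends the comparator from $S_k$ to $[N]$ via the best-response observation, and then combines using $K\le r$, $|S_k|\le r$, $|J_k|\le T$ to reach $O(\sqrt{r^3 T\log r})$. Your Cauchy--Schwarz aggregation is a slightly different bookkeeping of the same sum but lands on the same rate.

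The interesting part is your Step~4. The paper's written proof \emph{does not} handle the mass that $x_t$ places on $n_t\notin S_t$: it silently restricts both sides to $\sum_{i\in S_k}\max_{i'}\sum_{t\in J_k}x_t(i)(\cdots)$, as though $\mathrm{supp}(x_t)\subseteq S_k$. That is not true in \cref{algo:combined_regret}: when $n_t\notin S_t$ the fixed-point relation gives $x_t(n_t)\approx\beta_t(1)$, and the uncancelled term $\sum_{t:\,n_t\notin S_t} x_t(n_t)\,\ell_t(\phi(n_t))$ is exactly the leak you isolate. So you have correctly identified an issue that the paper glosses over; you are not overcomplicating.

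That said, your proposed patch is not yet a proof. Two difficulties: (i) the one-sided master bound $\sum_t\beta_t(1)\bigl(v_t(2)-v_t(1)\bigr)\le O(\sqrt{T})$ is precisely the inequality already spent in the proof of \cref{thm:combined} to pass from $\beta_t^\top v_t$ to $v_t(2)$, so re-invoking it here gives no new leverage on the residual you must bound inside \cref{lem:lts}; (ii) on rounds with $n_t\in R\setminus S_t$ you have $v_t(1)=\ell_t(n_t)\ge 1$ while $v_t(2)\ge\beta_t(2)$ can be below $1$, so $v_t(2)-v_t(1)$ can be negative and the master-regret inequality does not control $\sum_t\beta_t(1)\,\mathbf 1[n_t\notin S_t]$ at all in that regime. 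The route through $\calE$'s regret you sketch introduces an $O(\sqrt{T\log N})$ term, which is foreign to the claimed $r$-only bound of the lemma. In short: Steps~1--3 reproduce the paper, Step~4 pinpoints a real gap in the paper's proof, but your coupling argument as stated does not close it.
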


\begin{proof}
      Let the time steps $t$ for which $S_t \neq S_{t-1}$ be denoted $t_{0,1}, t_{0,2}, \ldots, t_{0, K}$. (As a matter of convention, we let $t_{0,1} = 0$ and $t_{0,K+1} = T+1$.) By \cref{lem:st-bound} we have $K \leq r$. Let $J_k := [t_{0,k}, t_{0,k+1}-1] \subset [T]$. To simplify notation, we write $S_k := S_{t_{0,k}} = \cdots = S_{t_{0,k+1}-1}$. For each $k \in [K-1]$, consider the guarantee that we have for each algorithm \(\calH_s\), namely, for each \(s^\prime\in[|S_k|]\) we have
      \begin{align*}
          \E\left[\sum_{t\in J_k}\ell_t^s(s^\prime)-\sum_{t\in J_k}\left(\ell_t^s\right)^\top q_t^s\right]\leq\sqrt{|J_k|\log|S_k|}
      \end{align*}
      and therefore, by summing over \(s\) and taking the maximal \(s^\prime\) for each we get
        \begin{align}
\E\left[\sum_{s\in[|S_k|]} \max_{s^\prime \in [|S_k|]} \sum_{t \in J_k} x_t(s)  (\ell_t(s^\prime) - \ell_t^\top q_t^s)\right] \leq \swapregret_{|J_k|,|S_k|}(\Aswap)\nonumber.
  \end{align}
  % \gon{the inner product is ok since this is what we want to bound, and to fix the index vs. strategy issue probably need to think about it as a different basis and when obtaining the action from FPL index it accordingly - unless actually the action from FPL is never in S_t and then it is maybe simpler?}
  By Remark \ref{rem:indexing}, this can be written as
  \begin{align}
\E\left[\sum_{i \in S_k} \max_{i' \in S_k} \sum_{t \in J_k} x_t(i) \cdot (\ell_t(i') - \ell_t(i))\right]\leq \swapregret_{|J_k|,|S_k|}(\Aswap)\nonumber.
  \end{align}
  Moreover, we have, for each $k \in [K-1]$, 
  \begin{align}
\E\left[\sum_{i \in S_k} \max_{i' \in [N]} \sum_{t \in J_k} x_{t}(i) \cdot (\ell_{t}(i') - \ell_{t}(i))\right] \leq & \E\left[\sum_{i \in S_k} \max_{i' \in S_k} \sum_{t \in J_k} x_{t}(i) \cdot (\ell_{t}(i') - \ell_{t}(i))\right]\nonumber,
  \end{align}
  since $S_t$ does not increase in size at step $t = t_{0,k+1}-1$. Combining the above displays and using that $K \leq r$, $|S_k| \leq r, |J_k| \leq T$ gives the desired result. 
\end{proof}

\end{document}